\newtheorem{lemma}{Lemma}
\newtheorem{proposition}{Proposition}
\newtheorem{corollary}{Corollary}
\newtheorem{property}{Property}
\newtheorem{remark}{Remark}
\newtheorem{claim}{Claim}
\title{
STT-GS: Sample-Then-Transmit\\Edge Gaussian Splatting with Joint Client Selection and Power Control}
\author{Zhen Li, Xibin Jin, Guoliang Li, Shuai Wang, Miaowen Wen, Huseyin Arslan,~\emph{Fellow, IEEE},\\Derrick Wing Kwan Ng,~\emph{Fellow, IEEE}, and 
Chengzhong Xu,~\emph{Fellow, IEEE}
\thanks{
This work was supported by the National Natural Science Foundation of China (Grant No. 62371444), and the Shenzhen Science and Technology Program (Grant No. RCYX20231211090206005, JCYJ20241202124934046). 
Corresponding authors: Guoliang Li and Shuai Wang.

Zhen Li is with School of Automation, Hangzhou Dianzi University, Hangzhou, China. (lizhen@hdu.edu.cn)

Xibin Jin and Miaowen Wen are with School of Electronic and Information Engineering, South China University of Technology, Guangzhou, China. (eexbj@mail.scut.edu.cn; eemwwen@scut.edu.cn)

Shuai Wang is with Shenzhen Institutes of Advanced Technology, Chinese Academy of Sciences, Shenzhen, China. (s.wang@siat.ac.cn)

Guoliang Li and Chengzhong Xu are with Department of Computer and Information Science, University of Macau, Macau, China. (e-mail: li.guoliang@connect.um.edu.mo; czxu@um.edu.mo)

Huseyin Arslan is with College of Engineering, Istanbul Medipol University, Istanbul, Turkey. (huseyinarslan@medipol.edu.tr)

Derrick Wing Kwan Ng is with School of Electrical Engineering and Telecommunications, the University of New South Wales, Sydney, Australia. (w.k.ng@unsw.edu.au)}
}
\begin{document}

\maketitle

\begin{abstract}
Edge Gaussian splatting (EGS), which aggregates data from distributed clients (e.g., drones) and trains a global GS model at the edge (e.g., ground server), is an emerging paradigm for scene reconstruction in low-altitude economy. Unlike traditional edge resource management methods that emphasize communication throughput or general-purpose learning performance, EGS explicitly aims to maximize the GS qualities, rendering existing approaches inapplicable. 
To address this problem, this paper formulates a novel GS-oriented objective function that distinguishes the heterogeneous view contributions of different clients. However, evaluating this function in turn requires clients' images, leading to a causality dilemma. 
To this end, this paper further proposes a sample-then-transmit EGS (or STT-GS for short) strategy, which first samples a subset of images as pilot data from each client for loss prediction. Based on the first-stage evaluation, communication resources are then prioritized towards more valuable clients. To achieve efficient sampling, a feature-domain clustering (FDC) scheme is proposed to select the most representative data and pilot transmission time minimization (PTTM) is adopted to reduce the pilot overhead.
Subsequently, we develop a joint client selection and power control (JCSPC) framework to maximize the GS-oriented function under communication resource constraints. Despite the nonconvexity of the problem, we propose a low-complexity efficient solution based on the penalty alternating majorization minimization (PAMM) algorithm. 
Experiments reveal that the proposed scheme significantly outperforms existing benchmarks on real-world datasets. 
The GS-oriented objective can be accurately predicted with low sampling ratios (e.g., $10\,\%$), and our method achieves an excellent tradeoff between view contributions and communication costs.
\end{abstract}

\begin{IEEEkeywords}
Edge intelligence, Gaussian splatting, low-altitude economy, mixed integer nonlinear programming, sample-then-transmit
\end{IEEEkeywords}

\section{Introduction}

Reconstructing three-dimensional (3D) environments is crucial for embodied robotics and low-altitude economy (LAE) applications \cite{agarwal2011building, zhu2018very,fruh2004automated,xu20193d,li2024seamless}.
However, conventional 3D reconstruction methods rely on neural radiance fields (NeRF), which often incur considerable computational overhead \cite{mildenhall2021nerf}. To address this limitation, 3D Gaussian splatting (GS)\cite{kerbl20233d} has recently been proposed through providing enhanced computational efficiency through explicit 3D geometric representations. 
For instance, 3D GS can be trained within ten minutes and is capable of rendering images at over 50\,Hz \cite{jena2025sparfels}.
In practice, reconstructing large-scale scenes inevitably requires leveraging data distributed across multiple robots \cite{maboudi2023review}. 
This motivates the adoption of the edge GS (EGS) paradigm, which aggregates data from distributed clients and trains a global GS model at the edge server.

{\color{black}
\subsection{Related Work}

Generally, edge resource optimization relies heavily on the designed objective functions. Conventional approaches \cite{yu2004iterative,zhang2024efficient,schubert2004solution,al2011achieving,beck20141,wang2020learning,wang2020machine,yoo2019learning} adopt communication throughput \cite{yu2004iterative,zhang2024efficient,schubert2004solution}, user fairness \cite{zheng2016wireless}, power consumption \cite{schubert2004solution,zhang2024integrated}, or general-purpose learning performance \cite{wang2020learning,wang2020machine,yoo2019learning,luo2023joint,luo2024deep} as their optimization objectives. 
Nonetheless, these metrics are not tailored for GS and cannot explicitly maximize the actual rendering quality. 
Recent studies attempt to design loss functions tailored to visual tasks, such as adjusting transmission priorities via feature matching errors \cite{wang2024mba}. However, these methods fail to account for the non-uniform contribution of Gaussian primitives to the eventual rendering quality. There also exist distributed GS methods that account for the rendering loss based on gradient importance \cite{suzuki2024fed3dgs}. Yet, such importance evaluation is often based on heuristics (e.g., gradient norm thresholding) and cannot fully reflect the geometric data characteristics. 
Therefore, how to define a proper objective function for EGS still remains an open question.

Given a certain objective function, the next step is to conduct optimization. 
Conventional edge resource optimization mainly focuses on physical-layer aspects. For instance, the learning centric power allocation method is proposed in \cite{wang2020learning}, and the multi-antenna beamforming design is proposed in \cite{liang2024communication} for edge intelligence systems.
To achieve the best rendering performance, there is a paradigm shift towards cross-layer optimization \cite{marchisio2019deep,zhou2019edge,feng2024edge,duan2023combining}. 
In this direction, prior works focus on edge video streaming \cite{feng2024edge} or edge machine learning \cite{zhou2019edge,duan2023combining}, improving general-purpose multimedia or learning performance by jointly optimizing physical-layer (e.g., power allocation) and application-layer (e.g., task scheduling) parameters. 
Nevertheless, these approaches neglect the GS rendering requirements.
Indeed, the clients should be prioritized according to their potential contributions to multi-view GS rendering, while taking their channel conditions into account. 
Such cross-layer GS optimization has not been addressed in existing literature.

\subsection{Contributions of This Work}

To fill the research gap, this paper formulates a novel GS-oriented objective function that distinguishes the heterogeneous view contributions of different clients.
This objective function, which is built upon the uncertainty sampling theory~\cite{yoo2019learning, zhu2009active, beluch2018power} and the vanilla GS loss function \cite{kerbl20233d}, enables the edge server to maximize the information gain brought by the collected GS data. 
Subsequently, we integrate the new objective function with the communication conditions for effective GS cross-layer optimization.  
This leads to a joint client selection and power control (JCSPC) framework, which unifies viewpoint contribution, interference mitigation, and resource allocation into a single optimization formulation. 

However, the above GS problem involves two key technical challenges. First, the GS-oriented objective, being a function of clients’ images, is inaccessible prior to data transmissions, while data transmissions conversely require the guidance from the GS-oriented objective.
This thereby introduces a so-called \textbf{causality dilemma} (i.e., chicken-and-egg paradox). 
Second, JCSPC involves nonlinear coupling among binary client selection variables and continuous power control variables, due to potential cross-layer multi-user interference.
Consequently, it belongs to a \textbf{nonconvex mixed integer nonlinear programming (NMINLP)} problem, which is nonconvex even after continuous relaxation. 
Existing approaches, e.g., convex optimization \cite{diamond2016cvxpy,yu2004iterative,schubert2004solution} or majorization minimization (MM) \cite{sun2016majorization,wang2020learning,wen2023byzantine,wen2024augment}, are not applicable.

To address the first challenge, this paper proposes a sample-then-transmit EGS (STT-GS for short) strategy, which first samples a subset of images as pilot data from each client
for loss prediction and then prioritizes resources towards more valuable clients based on the first-stage evaluation. 
Since sampling also involves communication costs, it is necessary to improve the sampling efficiency and reduce the pilot overhead. 
Thus, we propose feature-domain clustering (FDC) to select the most representative data, where the sampling ratio of FDC is determined using cross-validation.
Subsequently, we propose a fast iterative bisection search algorithm for pilot transmission time minimization (PTTM). 
To tackle the second challenge, we propose a penalty alternating majorization minimization (PAMM) method to solve the JCSPC problem.
Our PAMM first adopts variable splitting to decouple the selection and power variables involved in constraints, and leverages penalization to ensure constraint feasibility. 
Such reformulation yields a separable structure that can be safely handled by alternating minimization (AM), which is guaranteed to converge to a local minimum. \cite{beck20141}. 
By further incorporating MM into AM for joint optimization, the proposed PAMM algorithm effectively solves the NMINLP problem with polynomial computational complexity. 

We evaluate the proposed STT-GS scheme along with the FDC, PTTM, PAMM algorithms exploiting two \emph{real-world drone gathered} LAE datasets, i.e., rubble-pixsfm and building-pixsfm \cite{turki2022mega}. 
Experimental results show that the proposed scheme significantly enhances the rendering quality and strictly satisfies the communication resource budget. 
In particular, the proposed STT-GS outperforms the MaxRate \cite{zhang2024efficient} and Fairness \cite{zheng2016wireless} schemes by 4.50\% and 7.81\% in terms of peak signal-to-noise ratio (PSNR). Furthermore, through ablation studies, we also confirm the indispensability of FDC, PTTM, and PAMM algorithms.
To the best of our knowledge, this is the first attempt to integrate GS features and communication constraints into a unified framework.
}

Our main contributions are summarized as follows:
\begin{itemize}
    \item We introduce a sample-then-transmit strategy to support GS-oriented communication designs, resulting in a novel STT-GS framework. By first sampling representative images for loss prediction, STT-GS can prioritize resources towards valuable clients.
    \item We propose the FDC and PTTM techniques to achieve ultra-low pilot costs (e.g., $\leq 10\,\%$ sampling ratio), thereby reserving more transmission resources for the subsequent EGS image collection. 
    \item A cross-layer optimization framework, termed JCSPC, is proposed, which is effectively solved by PAMM. The PAMM-based JCSPC distinguishes heterogeneous view contributions of different clients and mitigates the interference among different clients via power control, thereby improving the image quality compared to existing EGS. 
    \item We implement the proposed STT-GS framework and algorithms on two real-world datasets. 
    Extensive results show significant improvements over existing schemes in terms of diverse metrics. 
    It is found that our method achieves the best tradeoff between view contributions and communication costs.
\end{itemize}

\subsection{Outline and Notations}

\begin{table}[!t]
  \caption{Summary of Important Variables and Parameters}
  \label{TableSum}
  \centering
  \fontsize{8.5pt}{10.5pt}\selectfont 
  \renewcommand{\arraystretch}{0.85} 
  \setlength{\tabcolsep}{3.5pt} 
  \begin{tabular}{|l|l|p{5.5cm}|}
    \hline
    \textbf{Symbol}          & \textbf{Type} & \textbf{Description} \\
    \hline
    $p_{k}$   & Variable      & Transmit power (in $\mathrm{Watt}$) of client $k$. \\
    $x_{k}$   & Variable      & Selection state ($\in\{0,1\}$) of client $k$. \\
    \hline
    $\mathcal{S}$            & Data          & Global GS model. \\
    $\mathcal{G}$            & Data          & 3D Gaussian with trainable parameters. \\
    $\mathcal{E}$            & Data          & Global dataset at the server. \\
    $\mathbf{v}_{i,k}$      & Data          & The $i$-th image of client $k$. \\
    $\mathbf{s}_{i,k}$       & Data          & The $i$-th camera pose of client $k$. \\
    $\mathcal{D}_k$          & Data          & Dataset at client $k \in \mathcal{K}$. \\
    \hline
    $\mathcal{R}(\cdot)$     & Function      & GS inference function. \\
    $\pi_k(\cdot)$           & Function      & Loss over dataset $\mathcal{D}_{k}$ given GS model $\mathcal{S}^{'}$ \\
    $\mathcal{L}(\cdot)$     & Function      & Loss function between image $\widehat{\mathbf{v}}_{i,k}$ and $\mathbf{v}_{i,k}$. \\
    $\text{Train}(\cdot)$    & Function      & GS training function. \\
    $C(\cdot)$               & Function      & Loss function of model $\mathcal{S}^{'}$ on $\bigcup_{k\in\mathcal{X}}\mathcal{D}_{k}$. \\
    $\varphi_1(\cdot)$       & Function      & Penalty function to promote binary solutions. \\
    $\widehat{\varphi}_1(\cdot)$ & Function & Majorized surrogate penalty for $\varphi_1(\cdot)$. \\
    $\varphi_2(\cdot)$       & Function      & Penalty function for non-convex constraints. \\
    $\Phi_k(\cdot)$          & Function      & Original constraint function in problem $P_2$. \\
    $\widehat{\Phi}_k(\cdot)$ & Function     & Majorized surrogate constraint function. \\
    \hline
    $\mathbf{h}_k$ & Parameter & Uplink channel of client $k$. \\
    $H_{k,j}$               & Parameter     & Composite channel gain. \\
    $K$                     & Parameter     & Number of clients, $k \in \{1, \dots, K\}$. \\
    $V_k$                   & Parameter     & The data volume of each sample (in bits). \\
    $B_{k}$                 & Parameter     & Bandwidth allocated to the client $k$ (in $\mathrm{Hz}$). \\
    $\sigma^2$              & Parameter     & Noise power (in $\mathrm{Watt}$). \\
    $R_{k}$                 & Parameter     & Achievable rate (in $\mathrm{bps}$) of the client $k$. \\
    $P_{\mathrm{max}},P_{\mathrm{sum}}$                     & Parameter     & Power budget of the client. \\
    $T$                     & Parameter     & Transmission time threshold. \\
    \hline
  \end{tabular}
\end{table}

The remainder of this paper is organized as follows.
Section \ref{section2} states the EGS problem. 
Section \ref{section3} describes the STT-GS system architecture.
Subsequently, Sections~\ref{section4} and \ref{section5} present the loss prediction and full transmission algorithms. 
Section~\ref{section6} presents experimental results. 
Finally, Section~\ref{section7} concludes this work. 

\emph{Notation}:
Italic, bold, capital bold, and curlicue letters represent scalars, vectors, matrices, and sets, respectively. $\nabla f$ represents the gradient of a function $f$. $\mathbb{E}(x)$ represents the expectation of a random variable $x$. 
$\|\cdot\|$ denotes the vector norm function. $\lceil \cdot \rceil$ is the ceiling function. 
$|\cdot|$ is the magnitude of a scalar or the cardinality of a set, and $[\,\cdot\,]^T$ denotes the matrix transpose operation.
The operators $(\cdot)^{T}$, $(\cdot)^{H}$, and $(\cdot)^{-1}$ take the transpose, Hermitian, and inverse of a matrix, respectively.
The symbol $\mathbf{I}_{N}$ indicates the $N\times N$ identity matrix,
$\mathbf{1}_{N}$ indicates the $N\times 1$ vector with all entries being unity, $\mathrm{diag}(\mathbf{x})$ indicates the diagonal matrix with diagonal elements being vector $\mathbf{x}$, and $\mathcal{CN}(0,1)$ stands for complex Gaussian distribution with zero mean and unit variance.
Symbols $\in\mathbb{R}_+$ and $\in\mathbb{C}$ denote non-negative real numbers and complex numbers, respectively. 
Important variables and parameters are summarized in Table~\ref{TableSum}.

\section{Edge Gaussian Splatting} \label{section2}

We consider an EGS system as shown in Fig.~\ref{fig1}, which consists of an edge server with $N$ antennas and $K$ single-antenna clients. 
The objective of the EGS system is to reconstruct a 3D scene by aggregating distributed data from clients $\mathcal{K}=\{1,\cdots,K\}$ and training a global GS model $\mathcal{S}=\{\mathcal{G}_1,\mathcal{G}_2,\cdots\}$ using edge GPUs, where each $\mathcal{G}_i$ denotes a 3D Gaussian with trainable parameters.
Below, we present our system model and problem formulation in detail.

\subsection{System Model}

\begin{figure*}[!t]
\centering 
\includegraphics[width=0.98\linewidth]{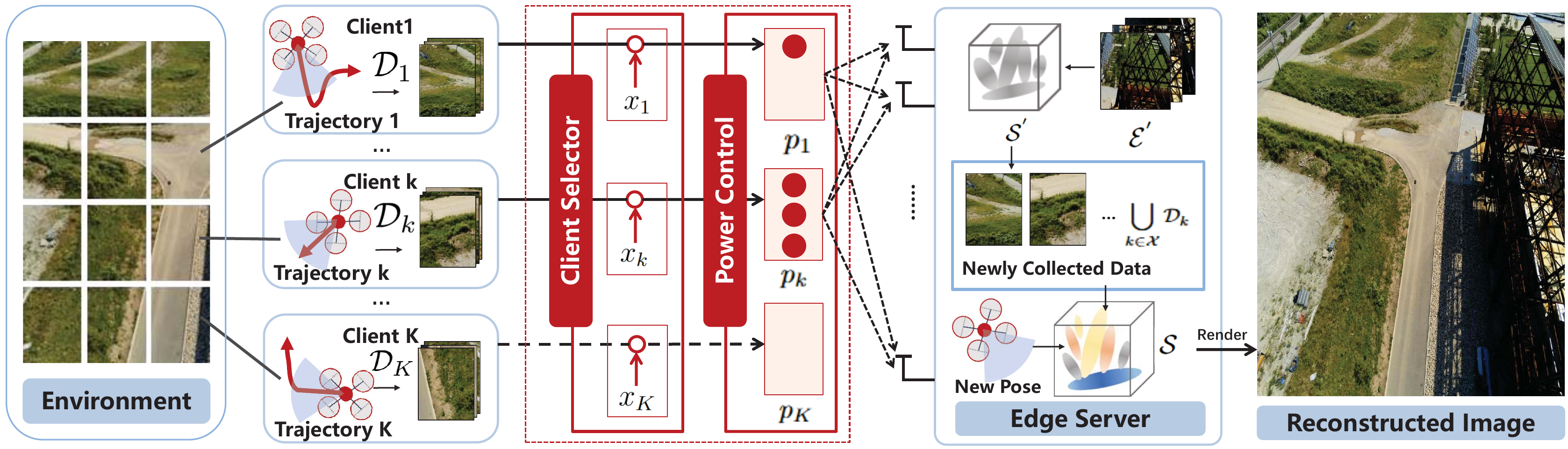}
\caption{The EGS system with client selection, power allocation, and beamforming receiver.}
\label{fig1}
\end{figure*}

Specifically, the dataset at client $k \in \mathcal{K}$ is given by
\begin{align}
\mathcal{D}_k = \left\{\mathbf{v}_{i,k}, \mathbf{s}_{i,k}\right\}_{i=1}^{l_k},
\end{align}
where $\mathbf{v}_{i,k}\in \mathbb{R}^{3LW}$ denotes the $i$-th image of client $k$, where $L$ and $W$ are the length and width of camera images respectively, and the coefficient $3$ accounts for the red green blue (RGB) channels. 
The $\mathbf{s}_{i,k}\in \mathbb{R}^{6}$ represents the associated 6D camera pose \cite{kerbl20233d}, and $l_k=|\mathcal{D}_k|$ is the number of samples at client $k$. 
The data volume in bits of each sample is $V_k$.
These data can be locally generated at each client by adopting monocular cameras and localization packages along the trajectory $(\mathbf{s}_{1,k},\mathbf{s}_{2,k},\cdots,\mathbf{s}_{l_k,k})$, which are standard modules equipped at modern robot platforms \cite{xu2022fast} (e.g., drones).

To avoid multi-user interference during dataset uploading, a client selection module with a binary vector $\mathbf{x}=[x_1,\cdots, x_K]^T$ and $x_k\in\{0,1\}$ is introduced, where $x_k=1$ indicates that the $k$-th client is being selected and $x_k=0$ otherwise. 
For a selected client $k$, e.g., $x_k=1$, it transmits its signal $z_{k}$ with power $\mathbb{E}[|z_{k}|^2]=p_{k}$ for uploading $\mathcal{D}_k$.
Accordingly, the aggregated signal \cite{wang2020learning} received at the edge server is written as
\begin{equation}
    \mathbf{r} = \sum_{k=1}^K x_k\mathbf{h}_{k} z_{k}+\mathbf{n},
    \label{eq:signal}
\end{equation}
where $\mathbf{r}=[r_1,\cdots,r_N]^T\in\mathbb{C}^{N\times 1}$ is the received signal, $\mathbf{h}_{k} \in \mathbb{C}^{N \times 1}$ denotes the channel from the $k$-th client to the edge server, and $\mathbf{n}\in\mathbb{C}^{N\times 1}$ is additive white Gaussian noise (AWGN) with zero mean and covariance $\mathbb{E}\{\mathbf{n}\mathbf{n}^{H}\}=\sigma^2\mathbf{I}_{N}$,
with $\mathbf{I}_{N}$ being the identity matrix of size $N\times N$. 
The wireless channel is assumed to follow Rician fading\cite{wang2020angle}, i.e., 
\begin{equation} 
\mathbf{h}_k = \sqrt{h_0 \omega_k d_k^{-\alpha}} \left( \sqrt{\frac{K_{\text{Ric}}}{K_{\text{Ric}} + 1}} \mathbf{h}^{\text{LOS}}_{k} + \sqrt{\frac{1}{K_{\text{Ric}} + 1}} \mathbf{h}^{\text{NLOS}}_{k} \right),
\nonumber
\end{equation}
where $h_0 = -30 \, \text{dB}$ is the path loss at $1 \, \text{m}$, $\omega_k$ is the shadow fading of client $k$, $d_k$ is the distance between client $k$ and the server, $\alpha$ is the path loss exponent, 
and $K_{\text{Ric}}$ is the Rician K-factor. 
Note that $\{d_k\}$ depends on clients' locations, and varies for different clients.

The LoS component $\mathbf{h}_{k}^{\mathrm{LOS}}$ is
\begin{align}\label{LOS}
\mathbf{h}_{k}^{\mathrm{LOS}}&=\Big[1,\mathrm{exp}\left(-\mathrm{j}\pi\,\mathrm{sin}\,\theta_k\right)
,\cdots,
\nonumber\\
&\quad{}
\mathrm{exp}\left(-(N-1)\,\mathrm{j}\pi\,\mathrm{sin}\,\theta_k\right)\Big]^T,
\end{align}
where $\theta_k\in\mathcal{U}(-\pi,+\pi)$.
The non-LoS component $\mathbf{h}_{k}^{\mathrm{NLOS}}\sim\mathcal{CN}(\mathbf{0},\mathbf{I}_N)$.

To separate the useful components $\mathbf{z} = [z_1,...,z_k]$ from the received signal $\mathbf{r}$, it is necessary to mitigate the interference and noise. An ideal approach is to apply a minimum mean square error (MMSE) combing vector to process the signal $\mathbf{r}$ \cite{lu2014overview}.
However, MMSE involves calculating the matrix inverse, which is a computationally intensive operation especially for EGS with a large number of antennas. 
Therefore, we choose the celebrated maximum ratio combining (MRC) as in \cite{song2025distributed,wang2020learning} with $\mathbf{w}_k=
\left\Vert\mathbf{h}_{k}\right\Vert_2^{-1}
\mathbf{h}_{k}$ due to its high computational efficiency. It has been proved in \cite{song2025distributed} that MRC achieves an asymptotically equivalent performance compared to MMSE (i.e., the benefit brought by MMSE over MRC vanishes as $N$ increases).

Based on the above analysis and applying $\mathbf{w}_k=
\left\Vert\mathbf{h}_{k}\right\Vert_2^{-1}
\mathbf{h}_{k}$ to $\mathbf{r}$, the uplink data rate of client $k$ is given by
\begin{align}
&R_{k}=B_k\mathrm{log}_2\left(1+\frac{H_{k,k}p_{k}}{\sum_{j=1,j\neq k}^KH_{k,j}p_{j}+
\sigma^2} \right), \label{Rk}
\end{align}
where $B_k$ in Hz is the bandwidth of the client $k$, and $H_{k,j}$ represents the composite channel gain (including channel fading and antenna processing) from client $j$ to the edge when decoding data from client $k$:
\begin{align}
&H_{k,j}=
\left\{
\begin{aligned}
&\left\Vert\mathbf{h}_{k}\right\Vert_2^2
,&\mathrm{if}~k=j
\\
&\frac{|\mathbf{h}_k^H\mathbf{h}_{j}|^2}{\left\Vert\mathbf{h}_{k}\right\Vert_2^2}
,&\mathrm{if}~k\neq j
\end{aligned}
\right.
.
\end{align}
It can be seen that different clients have different interference conditions, which complicates the $R_k$'s expression. 

With the newly collected data from clients $\mathcal{X}=\{k\in\mathcal{K}:x_k=1\}$, the server updates its global dataset as 
\begin{equation}
    \mathcal{E} = \left(\bigcup_{k\in\mathcal{X}}\mathcal{D}_{k} \right) \bigcup \mathcal{E}^{'},
    \label{eq:update_central_dataset_with_client_selection}
\end{equation}
where $\mathcal{E}^{'}$ denotes the historical dataset at the server.
Then, the server trains the GS model based on $\mathcal{E}$ as
\begin{equation}
    \mathcal{S} = \text{Train}\left(\mathcal{E}|\mathcal{S}^{'}\right),
    \label{eq:train_global_model}
\end{equation}
where $\text{Train}(\cdot)$ represents the training procedure for the GS model on dataset $\mathcal{E}$, given the initial GS model $\mathcal{S}^{'}$ obtained from previous trainings.

\begin{figure}[!t]
    \centering
    \includegraphics[width=0.98\columnwidth]{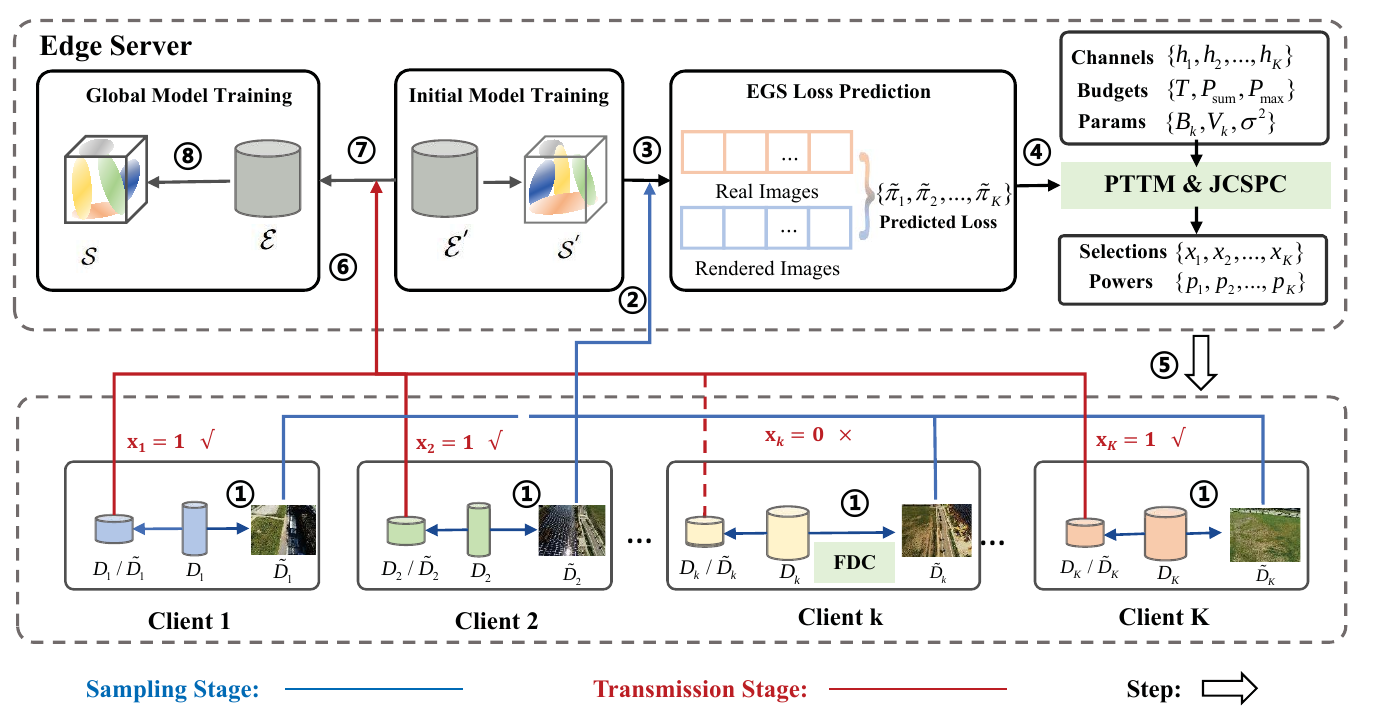}
    \caption{Workflow of STT-GS: (1) FDC-based sampling at clients, (2) transmission of sampled data, (3) loss prediction, (4) PTTM \& JCSPC optimization, (5) client selection and power allocation, (6) full data transmission, (7) server-side data aggregation, and (8) global model training. Communication is divided into a sampling stage (blue) and a transmission stage (red).}
    \label{fig:overview} 
\end{figure}

\subsection{Problem Formulation}

In the considered system, the design variables that can be controlled are the client selection vector $\mathbf{x}=[x_1,\cdots,x_K]^T$ and transmit power vector $\mathbf{p}=[p_1,\cdots,p_K]^T$.
To ensure low power consumption
and limited inter-user interference, both the individual and total powers must satisfy their prescribed limits, given by $P_{\text{max}}$ and $P_{\mathrm{sum}}$, respectively. Specifically, each client's transmit power must fulfill the constraint $\{0 \leq p_k \leq P_{\text{max}}, \forall k\}$ and $\sum_{k=1}^Kx_kp_k\leq P_{\mathrm{sum}}$ \cite{yu2004iterative, zhang2024efficient, al2011achieving, beck20141}. 
On the other hand, the dataset uploading at each selected client must be completed within a time threshold $T$, i.e., $x_kV_k|\mathcal{D}_k|R_k^{-1}\leq T$.\footnote{
One may wonder if the data uploading time $T$ is negligible compared to the GS model training time.
This is in fact, not true, due to the recent advancements in 3D GS training algorithms \cite{mallick2024taming,chen2025dashgaussian} (e.g., DashGaussian \cite{chen2025dashgaussian}), which have dramatically reduced model training time to merely $3.23$ minutes. 
Given this significant time reduction in model training, introducing a time threshold $T$ for data transmission becomes essential for EGS.}

Having the power and time constraints satisfied, it is then crucial to maximize the information gain contributed by $\mathcal{E}\setminus\mathcal{E}'= \bigcup_{k\in\mathcal{X}}\mathcal{D}_{k}$ for better 3D reconstruction. 
In other words, we need to collect those datasets that could change the GS model from $\mathcal{S}^{'}$ to $\mathcal{S}$ with the maximum extent. 
According to the uncertainty sampling theory~\cite{yoo2019learning, zhu2009active, beluch2018power}, this is equivalent to maximizing the total inference loss of model $\mathcal{S}^{'}$ on $\bigcup_{k\in\mathcal{X}}\mathcal{D}_{k}$, denoted as $C(\mathcal{S}^{'},\{\mathcal{D}_{k}\},\mathbf{x})$.
Specifically, the GS inference function $\mathcal{R}(\cdot|\mathcal{S}^{'})$ is able to render a photo-realistic image from a camera pose as $\widehat{\mathbf{v}}_{i,k}=\mathcal{R}(\mathbf{s}_{i,k}|\mathcal{S}^{'})$.
To derive an explicit form of $C$, we need to measure the difference between the rendered image $\widehat{\mathbf{v}}_{i,k}$ (produced using $\mathcal{S}^{'}$) and ground-truth image $\mathbf{v}_{i,k}$. 
According to \cite{kerbl20233d}, a proper metric is the vanilla GS that adopts the following loss function: 
 \begin{align}\label{LGS}
\mathcal{L}\left(\widehat{\mathbf{v}}_{i,k},\mathbf{v}_{i,k}\right)
&=(1-\lambda)\mathcal{L}_{1}(\widehat{\mathbf{v}}_{i,k},\mathbf{v}_{i,k}) 
+
  \lambda \mathcal{L}_{\mathsf{DSSIM}}(\widehat{\mathbf{v}}_{i,k},\mathbf{v}_{i,k}),
\end{align}
where functions $\mathcal{L}_{1},\mathcal{L}_{\mathsf{DSSIM}}$ are given by 
\begin{align}
\mathcal{L}_{1}(\widehat{\mathbf{v}}_{i,k},\mathbf{v}_{i,k}) &=\|\widehat{\mathbf{v}}_{i,k}-\mathbf{v}_{i,k}\|_1,
\\
\mathcal{L}_{\mathsf{DSSIM}}(\widehat{\mathbf{v}}_{i,k},\mathbf{v}_{i,k})
&=
{1-\mathcal{L}_{\mathsf{SSIM}} (\widehat{\mathbf{v}}_{i,k},\mathbf{v}_{i,k})},
\end{align}
respectively, with $\mathcal{L}_{\mathsf{SSIM}}$ being the SSIM function detailed in \cite[Eqn. 5]{wang2011ssim}, 
and the weight $\lambda\in[0,1]$ is set to $\lambda=0.2$ according to \cite{kerbl20233d}.

Based on the above analysis and by aggregating the losses over all samples and clients, 
the total loss function $C$ is
\begin{align}
&C\left(\mathcal{S}^{'},\{\mathcal{D}_{k}\},\mathbf{x}\right)
=\sum_{k=1}^{K} x_k \, 
\underbrace{\sum_{(\mathbf{v}_{i,k},\mathbf{s}_{i,k})\in \mathcal{D}_{k}} \mathcal{L}\left(
\mathcal{R}(\mathbf{s}_{i,k}|\mathcal{S}^{'})
,\mathbf{v}_{i,k}
\right)}_{\pi_k(\mathcal{S}^{'},\mathcal{D}_{k})},
    \label{loss_func}
\end{align}
where we define $\pi_k(\mathcal{S}^{'},\mathcal{D}_{k})$ as the loss function over dataset $\mathcal{D}_{k}$ given GS model $\mathcal{S}^{'}$.
This leads to the following EGS optimization problem:
\begin{subequations}
  \label{eq:OP_and_constraints}
  \begin{align}
    \mathsf{P}: \max_{\substack{\mathbf{x},\mathbf{p}}} 
      \quad & C\left(\mathcal{S}^{'},\{\mathcal{D}_{k}\},\mathbf{x}\right) 
      \label{eq:op_with_prediction_loss} \\
    \text{s.t.}\quad & 
      TB_k\log_2 \left( 1 + \frac{x_k p_{k}H_{k,k}}
	{
    \sum_{j\neq k} x_jp_jH_{k,j}+\sigma^2} \right)
    \nonumber\\
    & \geq x_kV_k|\mathcal{D}_k|, ~ \forall k,
    \label{eq:load1_constraints} \\ 
      & 0 \leq p_k \leq P_{\text{max}}, ~ \forall k, ~ \sum_{k=1}^K p_k  \leq P_{\text{sum}},
      \label{eq:power_limits} \\
    & x_k \in \{0, 1\}, ~ \forall k.
      \label{eq:binary_constraint} 
  \end{align}
\end{subequations}

The major obstacle to solving $\mathsf{P}$ is the inaccessibility of the cost function $C$. 
This is because at the optimization stage, the edge server has not yet received the raw image data $\{\mathbf{v}_{i,k}\}$ from the clients, and the loss $\mathcal{L}$ in \eqref{LGS} cannot be evaluated directly.
To address the challenge, Section~\ref{section3} will propose a two-stage STT-GS scheme that can efficiently estimate $\mathcal{L}$ and predict the rendering loss prior to full data transmissions, thereby enabling efficient resource allocation for EGS.

\begin{remark}
We assume that the uplink channels $\{H_{k,j}\}$ in problem P are known at the edge server by using the pilot signals. If the estimated values of  $\{H_{k,j}\}$ involve errors, we can reformulate P as robust EGS by constraining the outage probability (OP) of each client below a certain threshold. Such OP constraints can be handled by Bernstein-Type Inequalities.
\end{remark}

\section{Proposed Sample-Then-Transmit Strategy} \label{section3}

The architecture of our STT-GS scheme is shown in Fig. \ref{fig:overview}. 
The input consists of the distributed datasets $\{\mathcal{D}_k\}$ at clients, the historical dataset $\mathcal{E}^{'}$ at the server which is used to pretrain the initial GS model $\mathcal{S}'$, channels $\{\mathbf{h}_k\}$, time budget $T$, power budgets $(P_{\text{max}},P_{\text{sum}})$, and hyper-parameters $(B_k,V_k,\sigma^2)$.
The outputs of the system consist of the client selections $\mathbf{x}$, transmit powers $\mathbf{p}$, and the trained GS model $\mathcal{S}$.
The goal of sample-then-transmit EGS is to collect the most valuable dataset $\mathcal{E}$ from the proper clients, under strict time and power budgets $(T,P_{\text{max}},P_{\text{sum}})$, so as to improve the rendering quality of $\mathcal{S}$. 

The entire pipeline is divided into the following steps:
\begin{itemize}
    \item[1)] Sampling: Each client samples a sub-dataset $\{\widetilde{\mathcal{D}}_k \subseteq \mathcal{D}_k, \forall k\}$ according to the FDC module (will be detailed in Section IV-A);    
    \item[2)] First-stage transmission: Clients upload sub-datasets $\{\widetilde{\mathcal{D}}_k, \forall k\}$ (i.e., pilot data) to the server using PTTM (will be detailed in Section IV-B);
    \item[3)] Validation: The edge server renders images $\{\widehat{\mathbf{v}}_{i,k}\}$ based on $\mathcal{S}'$ and $\{\widetilde{\mathcal{D}}_k, \forall k\}$; 
    \item[4)] Prediction: The edge server estimates the GS loss as $\pi(\mathcal{S}^{'},\mathcal{D}_k)
    \approx \tilde{\pi}(\mathcal{S}^{'},\tilde{\mathcal{D}}_k)$ using $\{\widetilde{\mathcal{D}}_k\}$;
    \item[5)] Scheduling: The edge server selects clients (i.e., $\mathbf{x}$) and controls powers $\mathbf{p}$ by solving $\mathrm{P}$ with $C(\mathcal{S}^{'},\{\mathcal{D}_{k}\},\mathbf{x})\approx \sum_{k=1}^Kx_k\tilde{\pi}(\mathcal{S}^{'},\tilde{\mathcal{D}}_k) 
    $ using PAMM (will be detailed in Section V);
    \item[6)] Second-stage transmission: Selected clients upload their remaining datasets $\{\mathcal{D}_k\setminus\widetilde{\mathcal{D}}_k, \forall k~\mathrm{with}~x_k=1\}$ according to the scheduling results; 
    \item[7)] Data aggregation: The server aggregates datasets
    $\mathcal{E}=\mathcal{E}^{'}\cup\{\widetilde{\mathcal{D}}_k, \forall k\}\cup \{\mathcal{D}_k\setminus\widetilde{\mathcal{D}}_k, \forall k~\mathrm{with}~x_k=1\}$; 
    \item[8)] GS training: Train a new GS model $\mathcal{S}$ based on $\mathcal{E}$.
\end{itemize}

The above $8$ steps can be categorized into two main stages, i.e., EGS loss prediction and EGS full transmission, where each stage consists of $4$ steps. 
\begin{itemize}
\item EGS loss prediction consists of steps 1--4, which determines how to sample the data, upload the sampled data, validate the data, and predict the loss. This first-stage transmission predicts the loss function $C$ by uploading subsets $\{\widetilde{\mathcal{D}}_k\}_{k=1}^K$.
\item EGS full transmission consists of steps 5--8, which involves client scheduling, full data uploading, data aggregation, and GS model training. This second-stage transmission completes the data collection of $\{\mathcal{D}_k, \forall k~\mathrm{with}~x_k=1\}$.
\end{itemize} 
The following sections present the two stages in detail. 

\section{EGS Loss Prediction}\label{section4}

In the EGS loss prediction stage, each client selects a small but representative subset  $\tilde{\mathcal{D}}_k \subseteq \mathcal{D}_k$ known as pilot data, defined as:
\begin{equation}
  \lvert \tilde{\mathcal{D}}_k \rvert = \lceil \rho_k \lvert \mathcal{D}_k \rvert \rceil,\quad 0 < \rho_k < 1,
  \label{eq:sampling_process}
\end{equation}
where $\lceil \cdot \rceil$ is the ceiling function and $\rho_k$ is the sampling ratio. 
A larger $\rho_k$ leads to better GS loss prediction, but incurs higher transmission overhead; and vice versa. 
In practice, we can fine-tune $\rho_k$ within a certain interval (e.g., $5\%\sim 20\%$) exploiting cross-validation to achieve a desirable balance between prediction accuracy and transmission overhead. For instance, we observe in our experiments that $\rho_k=10\%$ achieves excellent trade-off (will be detailed in Section VI).

Given $\rho_k$, the next question is how to determine $\tilde{\mathcal{D}}_k$. 
A naive approach is to adopt random sampling, which randomly chooses an image inside $\mathcal{D}_k$ each time until $\lvert \tilde{\mathcal{D}}_k \rvert$ images are collected \cite{mayer2020adversarial}.
Another approach is to adopt uniform sampling, which orders the images in $\mathcal{D}_k$ according to the timestamps and selects an image for transmission for every $\lvert \mathcal{D}_k \rvert/\lvert \tilde{\mathcal{D}}_k \rvert$ image frames. 
These methods do not take the importance of images into account, and may lead to redundancy in transmitting similar images. 
For example, if the robot remains stationary, multiple images may be nearly identical, uploading them yields little additional value. To address the problem, the following subsection will present a method that improves sampling efficiency compared to the above approaches.

\subsection{Feature Domain Clustering for Efficient Sampling}

We propose the FDC method that selects the most representative images inside $\mathcal{D}_k$, such that the information loss between $\tilde{\mathcal{D}}_k$ and $\mathcal{D}_k$ is minimized. 
The goal is equivalent to maximizing the similarity between the underlying distributions of $\tilde{\mathcal{D}}_k$ and $\mathcal{D}_k$.
Motivated by such insights, the FDC scheme first adopts a feature extraction module to convert raw images into their embedding feature spaces, and then clusters the features into $\lvert \tilde{\mathcal{D}}_k \rvert$ groups with nonuniform sampling.
For each cluster, we select the representative image by minimizing the $l_2$ norm between its feature and the centroid. 
By iterating over all clusters, we obtain the sampled dataset $\tilde{\mathcal{D}}_k$. 
The entire workflow of FDC is shown in Fig.~\ref{fig:GS-data-sampling}. 

\begin{figure}[!t]
    \centering
    \includegraphics[width=0.98\columnwidth]{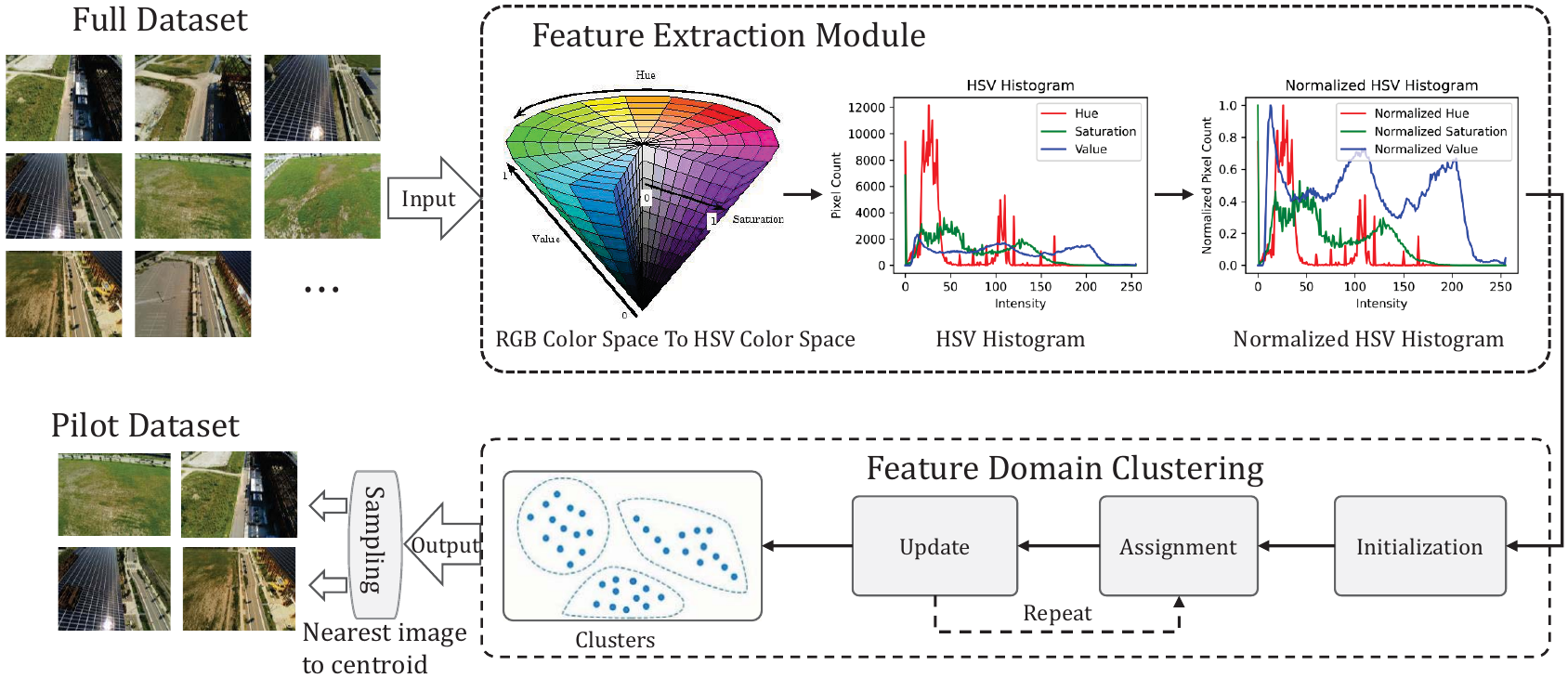}
    \caption{Sampling workflow of FDC.}
    \label{fig:GS-data-sampling} 
\end{figure}

Specifically, we consider the Hue Saturation Value (HSV) feature representation \cite{kuehni2008color}, where 
H is hue angle, S is saturation intensity, and V is value brightness. 
The feature extraction process is thus
\begin{equation}
    \mathbf{v}_{i,k}^{\text{HSV}} = E_{\text{HSV}}\left(\mathbf{v}_{i,k}\right),
    \label{HSV}
\end{equation}
where function $E_{\text{HSV}}$ converts an RGB image $\mathbf{v}_{i,k}$ into an HSV image $\mathbf{v}_{i,k}^{\text{HSV}}$ (flattened into a vector). 
Note that all elements of $\mathbf{v}_{i,k}^{\text{HSV}}$ are normalized between $[0,1]$.

With $\{\mathbf{v}_{i,k}^{\text{HSV}}\},\forall i,k$, we then solve the following optimization problem:
\begin{align}
   \mathsf{Q}_1: & \min_{\{\mathcal{C}_{m,k},\mathbf{c}_{m,k}\}} 
     \sum_{m=1}^{|\tilde{\mathcal{D}}_k|} \sum_{\mathbf{y}_{i,k} \in \mathcal{C}_{m,k}} \|\mathbf{y}_{i,k} - \mathbf{c}_{m,k}\|^2\ 
    \nonumber\\
     & \ \ \mathrm{s.t.} \ \ 
    \mathcal{C}_{1,k}\bigcup\mathcal{C}_{2,k}\cdots \bigcup\mathcal{C}_{|\tilde{\mathcal{D}}_k|,k} =\{\mathbf{v}_{i,k}^{\text{HSV}}\}_{i=1}^{|\mathcal{D}_k|},
\end{align}
where $\mathcal{C}_{m,k} $ is the set of feature vectors in the $ m $-th cluster, and $\mathbf{c}_{m,k}$ is the centroid of cluster $\mathcal{C}_{m,k}$. 
Problem $\mathsf{Q}_1$ can be addressed by alternating minimization (AM) that iterates between solving $\mathcal{C}_{m,k} $'s subproblem and $\mathbf{c}_{m,k} $'s subproblem. 
The AM method will converge to a local optimal solution to $\mathsf{Q}_1$. 
The local optimal solution of centroids and clusters provided by AM are denoted as $\{\mathbf{c}_{m,k}^*\}$ and $\{\mathcal{C}_{m,k}^*\}$, respectively.

Finally, given $\{\mathbf{c}_{m,k}^*\}$ and $\{\mathcal{C}_{m,k}^*\}$, 
the representative image of the $m$-th cluster at client $k$ is selected as the one closest to associated centroid:
\begin{equation}
    \tilde{\mathbf{v}}_{m,k} = \mathrm{arg}\min_{ \mathbf{y}_{i,k} \in\mathcal{C}_{m,k}^* } \|\mathbf{y}_{i,k}-\mathbf{c}_{m,k}^*\|^2.
    \label{HSV}
\end{equation}
The sampled dataset at client $k$ is 
$\tilde{\mathcal{D}}_k=\{\tilde{\mathbf{v}}_{m,k},\tilde{\mathbf{s}}_{m,k}\}_{m=1}^{|\tilde{\mathcal{D}}_k|}$, where $\tilde{\mathbf{s}}_{m,k}$ is the pose of camera image $\tilde{\mathbf{v}}_{m,k}$.
Finally, all $\tilde{\mathcal{D}}_k$ will be uploaded to the server during the first-stage transmission.
The entire procedure of the FDC method is summarized in Algorithm~\ref{alg:GS_Data_Clustering}.

\begin{algorithm}[!t]
\SetAlgoLined
\DontPrintSemicolon
\KwIn{
Clients' datasets $\mathcal{D}=\{\mathcal{D}_1,\ldots,\mathcal{D}_K\}$ and sampling ratio $\rho_k$.}
\KwOut{Pilot datasets $\tilde{\mathcal{D}}=\cup_{k=1}^{K} \tilde{\mathcal{D}}_k$.}

\For{each client $k$ in $1$ to $K$}{
    $\lvert \tilde{\mathcal{D}}_k \rvert = \lceil \rho_k \lvert \mathcal{D}_k \rvert \rceil$\;
    
    \For{$\mathbf{v}_{i,k}$ in $\mathcal{D}_k$}{
        $\mathbf{v}_{i,k}^{\text{HSV}} = E_{\text{HSV}}(\mathbf{v}_{i,k})$\;
    }
    Solve $\mathrm{Q}_1$ using AM and obtain $\{\mathbf{c}_{m,k}^*, \mathcal{C}_{m,k}^*\}$

    \For{$m$ in $1$ to $\lvert \tilde{\mathcal{D}}_k \rvert$}{
        $\tilde{\mathbf{v}}_{m,k} = \arg\min_{\mathbf{y}_{i,k} \in \mathcal{C}_{m,k}^*} \|\mathbf{y}_{i,k}-\mathbf{c}_{m,k}^*\|$\;
        $\tilde{\mathbf{s}}_{m,k} = \text{state of } \tilde{\mathbf{v}}_{m,k}$\;
    }

    $\tilde{\mathcal{D}}_k=\{\tilde{\mathbf{v}}_{m,k},\tilde{\mathbf{s}}_{m,k}\}_{m=1}^{|\tilde{\mathcal{D}}_k|}$\;
}

$\tilde{\mathcal{D}}=\cup_{k=1}^{K} \tilde{\mathcal{D}}_k$\;
\Return $\tilde{\mathcal{D}}$\;

\caption{FDC for EGS Loss Prediction}
\label{alg:GS_Data_Clustering}
\end{algorithm}

\subsection{Pilot Transmission Time Minimization}

Having determined the sampling strategy, it is then crucial to design the associated transmission strategy for uploading the sampled pilot data.
Since at this point, we have no access to the clients' data, the best approach is to upload the pilot data as quickly as possible to reserve more transmission time for the second stage.
This leads to the pilot transmission time minimization (PTTM) problem of the first stage:
\begin{subequations}
  \label{eq:OP_sampling}
  \begin{align}
    \mathsf{P}_1: \min_{\substack{T_0,\mathbf{p}}}
      \quad & T_0  \\
   \ \text{s.t.} \ \ \ \
   &
      \log_2 \left( 1 + \frac{p_{k}H_{k,k}}
	{
    \sum_{j\neq k} p_jH_{k,j}+\sigma^2} \right) \geq \frac{V_k |\tilde{\mathcal{D}_{k}}|}{T_0B_k}, ~ \forall k,  \label{20b}  \\ 
      & \textsf{constraints } (\ref{eq:power_limits}),
  \end{align}
\end{subequations}
where $T_0$ denotes the time of uploading pilot data.
By taking the exponential on both sides of constraint \eqref{20b}, $\mathsf{P}_1$ can be reformulated into a quasi-linear optimization problem, which can be solved optimally by the iterative bisection search and commercial convex optimization techniques.

Now, we discuss the algorithm for solving problem $\mathsf{P}_1$ in detail. Specifically, given an upper bound for the bisection searching interval, say $T_{\mathrm{max}}$, and a lower bound $T_{\mathrm{min}}$, the trial point is set to $\kappa=(T_{\mathrm{max}}+T_{\mathrm{min}})/2$.
If $\mathsf{P}_1$ with $T_0=\kappa$ is feasible, the upper bound is updated as $T_{\mathrm{max}}=\kappa$; otherwise, the lower bound is updated as $T_{\mathrm{min}}=\kappa$.
The process is repeated until $|T_{\mathrm{max}}-\kappa|<\epsilon$, where $\epsilon$ is a small positive constant to control the accuracy.

As $T_0\geq0$, an initial $T_{\mathrm{min}}$ can be chosen as $0$.
On the other hand, according to the time budget, a valid initial $T_{\mathrm{max}}$ can be set to $T_{\mathrm{max}}=T$.
With the obtained bounds, we are now ready to apply the bisection algorithm to find $T^*_0$ in $\mathsf{P}_1$.
An efficient way to provide a feasibility check of $\mathsf{P}_1$ given $T_0=\kappa$ is to first minimize the total transmit power via the following linear programming problem:
\begin{subequations}
  \label{eq:OP_sampling}
\begin{align}
    \mathsf{F}_1: \min_{\substack{\mathbf{p}}}
      \quad & 
      \sum_{k=1}^K p_k
      \\
   \ \text{s.t.} \ \ \ \
   &
\left(2^{\frac{V_k |\tilde{\mathcal{D}_{k}}|}{T_0B_k}}-1\right)
    \left(\sum_{j\neq k} p_jH_{k,j}+\sigma^2\right)
    \nonumber\\
    &
    -p_{k}H_{k,k}
    \leq 0
    , ~ \forall k,  \label{21b}  \\ 
      &0 \leq p_k \leq P_{\text{max}}, ~ \forall k.
\end{align}
\end{subequations}
and then check whether the optimal $\{p_k^*\}$ satisfies $ \sum_{k=1}^K p_k\leq P_{\mathrm{sum}}$.
If so, problem $\mathsf{P}_1$ with $T_0=\kappa$ is feasible;
otherwise, the transmit power at clients cannot support time $\kappa$ and $\mathsf{P}_1$ with $T_0=\kappa$ is infeasible.
In conclusion, the procedure for computing the first-stage power allocation is summarized in Algorithm 2.

\begin{algorithm}[!t]
    \caption{PTTM}
        \begin{algorithmic}[1]
\State \textbf{Input} $\{H_{k,j}\}$, $\sigma^2$, $T$, $\tilde{\mathcal{D}}_k$, $P_{\text{sum}}$, $P_{\text{max}}$, $B_k$, $V_k$.
            \State \textbf{Output} $T_0,\mathbf{p}$.
            \State \textbf{Repeat}
            \State \ \ \ Update $\kappa=(T_{\mathrm{min}}+T_{\mathrm{max}})/2$.
            \State \ \ \ Solve problem $\mathsf{F}_1$.
            \State \ \ \ Update $T_{\mathrm{max}}=\kappa$ if $\mathsf{F}_1$ is feasible.
             \State \ \ \ Update $T_{\mathrm{min}}=\kappa$ if $\mathsf{F}_1$ is infeasible.
            \State \textbf{Until} $|T_{\mathrm{max}}-\kappa|<\epsilon$, where $\epsilon$ is a small positive constant to control the accuracy.
        \end{algorithmic}
\end{algorithm}

\subsection{Pilot Validation and Loss Estimation}

With the received images $\{\tilde{\mathcal{D}}_k\}$, the edge server can predict the rendering loss for each client and selects the most informative clients for further data transmission. 
In particular, the server renders the GS images 
$\mathcal{R}(\tilde{\mathbf{s}}_{m,k}|\mathcal{S}^{'})$ 
from poses $\tilde{\mathbf{s}}_{m,k}$. 
Then, the GS loss for all samples in $\{\tilde{\mathcal{D}}_k\}$ is given by 
\begin{align}
\psi_k(\mathcal{S}^{'},\tilde{\mathcal{D}}_k)
=
\sum_{(\tilde{\mathbf{v}}_{i,k},\tilde{\mathbf{s}}_{i,k})\in \tilde{\mathcal{D}_{k}}} \mathcal{L}\left(
\mathcal{R}(\tilde{\mathbf{s}}_{m,k}|\mathcal{S}^{'})
,\tilde{\mathbf{v}}_{i,k}
\right).
    \label{eq:avg_loss}
\end{align}
Therefore, the actual loss $\pi_k(\mathcal{S}^{'},\mathcal{D}_k)$ in problem $\mathrm{P}$, can be safely approximated by $\tilde{\pi}_k(\mathcal{S}^{'},\tilde{\mathcal{D}}_k)$ as 
\begin{align}\label{appgs}
    \pi_k(\mathcal{S}^{'},\mathcal{D}_k) \approx 
    \tilde{\pi}_k(\mathcal{S}^{'},\tilde{\mathcal{D}}_k)
    =
    \frac{|\mathcal{D}_{k}|}{|\tilde{\mathcal{D}_{k}}|}
    \psi_k(\mathcal{S}^{'},\tilde{\mathcal{D}}_k).
\end{align}
Accordingly, the loss function in $\mathsf{P}$ is approximated as 
\begin{align}
&C\left(\mathcal{S}^{'},\{\mathcal{D}_{k}\},\mathbf{x}\right)
\approx
\sum_{k=1}^{K} x_k \, 
\frac{|\mathcal{D}_{k}|}{|\tilde{\mathcal{D}_{k}}|}
    \psi_k(\mathcal{S}^{'},\tilde{\mathcal{D}}_k).
    \label{loss_func}
\end{align}

\section{EGS Full Transmission With\\Joint Client Scheduling and Power Allocation}\label{section5}

In the EGS full transmission stage, the key is to select the most valuable client for GS training based on the predicted GS loss in Section IV.
The problem is formulated as 
\begin{subequations}
  \begin{align}
    &\mathsf{P}_2: \max_{\substack{\mathbf{x},\mathbf{p}}} 
      \quad  \sum_{k=1}^{K} x_k \, 
    \tilde{\pi}_k(\mathcal{S}^{'},\tilde{\mathcal{D}}_k), 
      \label{eq:op_with_avg_loss} \\
    & \ \text{s.t.} \
      \log_2 \left( 1 + \frac{x_k p_{k}H_{k,k}}
	{
    \sum_{j\neq k} x_jp_jH_{k,j}+\sigma^2} \right)\geq 
    \eta_kx_k, ~ \forall k,
    \label{eq:load_constraints_after_sample} \\ 
      & \ \ \ \ \ \ \textsf{constraints } (\ref{eq:power_limits}),(\ref{eq:binary_constraint}),
  \end{align}
\end{subequations}
where
\begin{align}
&\eta_k=\frac{V_k (|\mathcal{D}_k|-|\tilde{\mathcal{D}_{k}}|)}{(T-T_0)B_k}.
\end{align}
It can be seen that the time budget is reduced from $T$ to $T-T_0$, since $T_0$ has been consumed during loss prediction. 
Accordingly, the required number of samples for transmission is reduced from 
$|\mathcal{D}_k|$ to $|\mathcal{D}_k|-|\tilde{\mathcal{D}_{k}}|$, since $\tilde{\mathcal{D}_{k}}$ has already been uploaded in the sampling stage.

Problem $\mathsf{P}_2$ involves nonlinear coupling among binary variables $\mathbf{x}$ and continuous variables $\mathbf{p}$. 
Furthermore, even if we relax $\mathbf{x}$ into a continuous variable, the resultant problem is still nonconvex due to potential multi-user interference.
Hence, existing optimization solvers such as Mosek are not applicable.
To tackle the challenge, in the following, we will propose a PAMM algorithm that solve $\mathsf{P}_2$ efficiently. 
It consists of an outer penalty alternating minimization (PAM) iteration and an inner majorization minimization (MM) iteration (thus we name it PAMM).
Below we present PAM and MM in detail.

\subsection{Penalty Alternating Minimization}
\label{sec:pmm}

To tackle the discontinuity in constraint \eqref{eq:binary_constraint}, we first relax the binary constraint $x_{k}\in\{0,1\}$ into an affine constraint $x_{k}\in [0,1]$, $\forall k$. 
However, the relaxation is generally not tight and the solution to the relaxed problem could be $0<x_{k}<1$. 
To promote a binary solution for the relaxed variable $\{x_{k}\}$, we augment the objective function with a penalty term as in \cite{rinaldi2009new}. 
Here, we adopt the following penalty function \cite{lucidi2010exact}:
\begin{align} \label{eq:penaltyterm1}
    \varphi_1(\mathbf x)=\frac{1}{\beta}\sum_{k=1}^K x_{k}(1-x_k),
\end{align}
where $\beta>0$ is the penalty parameter.

On the other hand, to tackle the nonconvex logarithm function in the constraint \eqref{eq:load_constraints_after_sample}, we introduce slack variables $\bm{\xi}=[\xi_{1},\cdots,\xi_{K}]^T$ such that
$\xi_{k}= x_kp_k, \forall k$.
This is equivalent to augmenting the objective function with the following penalty term:
\begin{align} \label{eq:penaltyterm}
    \varphi_2(\mathbf{x},\mathbf{p},\bm{\xi})=\gamma
    \sum_{k=1}^{K} \|\xi_{k}-x_kp_k\|^2,
\end{align}
where $\gamma$ is a sufficiently-large penalty parameter to ensure tight coupling between $\xi_{k}$ and $x_kp_k$.

Based on the above penalty functions, problem $\mathsf{P}_2$ is equivalently transformed into 
  \begin{align}
    &\mathsf{P}_3:\min_{\substack{\mathbf{x},\mathbf{p},\bm{\xi}}} 
      \quad  -\sum_{k=1}^{K} x_k \, 
    \tilde{\pi}_k(\mathcal{S}^{'},\tilde{\mathcal{D}}_k)
    +
\varphi_1(\mathbf x)  
+\varphi_2(\mathbf{x},\mathbf{p},\bm{\xi})
      \nonumber \\
    & \ \text{s.t.} \ \
      {\Phi}_{k}(\mathbf{x},\bm{\xi})\leq 0
    , \ 0\leq x_k \leq 1, \ \forall k,
     \nonumber \\ 
      & \ \ \ \ \ \
      0 \leq p_k \leq P_{\text{max}}, ~ \forall k, ~ \sum_{k=1}^K p_k  \leq P_{\text{sum}}, 
  \end{align}
where 
\begin{align}
  {\Phi}_{k}(\mathbf{x},\bm{\xi})
  =
  \eta_k x_k-\log_2 \left( 1 + \frac{\xi_{k}H_{k,k}}
	{
    \sum_{j\neq k} \xi_{j}H_{k,j}+\sigma^2} \right).
\end{align}

According to \cite[Proposition 1]{lucidi2010exact}, with the penalty term in (\ref{eq:penaltyterm1}), there exists an upper bound $\bar\beta>0$ such that for any $\beta\in[0,\bar\beta]$, $\mathrm{P}_{2}$ and $\mathrm{P_{3}}$ are equivalent with a proper choice of $\beta$ \cite{lucidi2010exact}. 

Now, variables $\{\mathbf{x},\bm{\xi}\}$ and $\mathbf{p}$ are independent in all constraints. Consequently, to solve $\mathrm{P}_3$, we can adopt alternating minimization that solves $\{\mathbf{x},\bm{\xi}\}$'s subproblem and $\mathbf{p}$'s subproblem iteratively. 
This procedure is guaranteed to converge to a local minimum solution of $\mathsf{P}_3$ according to 
\cite{abeck}. 

In particular, the method initializes a feasible $\mathbf{p}=\mathbf{p}^{[0]}$ (e.g., $\mathbf{p}^{[0]}=P_{\mathrm{sum}}/K$) and then solves the following sequence of optimization problems:
\begin{align}
    \mathrm{P}_{3a}^{[1]} \rightarrow \mathrm{P}_{3b}^{[1]} \rightarrow \mathrm{P}_{3a}^{[2]} \rightarrow \mathrm{P}_{3b}^{[2]} \cdots \label{iter}
\end{align}
At the $i$-th iteration of the PAM, given a fixed $\mathbf{p}=\mathbf{p}^{[i-1]}$, we solve for $\{\mathbf{x},\bm{\xi}\}$ as follows:
\begin{subequations}
  \begin{align}
   & \mathsf{P}_{3a}^{[i]}: \min_{\substack{\mathbf{x},\bm{\xi}}} 
      \quad -\sum_{k=1}^{K} x_k \, 
    \tilde{\pi}_k(\mathcal{S}^{'},\tilde{\mathcal{D}}_k)
    +
\varphi_1(\mathbf x)
    \nonumber\\
    &\quad\quad \
    +
    \gamma
    \sum_{k=1}^{K} \|\xi_{k}-x_kp_k^{[i-1]}\|^2
    , 
      \label{P3a_1} \\
    & \ \text{s.t.} \ \
 {\Phi}_{k}(\mathbf{x},\bm{\xi})\leq 0, \ 0\leq x_k \leq 1, ~ \forall k.
    \label{P3a_2} 
  \end{align}
\end{subequations}
Denoting the solution of $\mathrm{P}_{3a}^{[i]}$ as $\{\mathbf{x}^*,\bm{\xi}^*\}$, we then set $\{\mathbf{x}^{[i]}=\mathbf{x}^*,\bm{\xi}^{[i]}=\bm{\xi}^*\}$, and solve for $\mathbf{p}$ as follows:
\begin{subequations}
  \begin{align}
    \mathsf{P}_{3b}^{[i]}: & \min_{\substack{\mathbf{p}}} 
      \quad  \sum_{k=1}^{K} \|\xi_{k}^{[i]}-x_k^{[i]} p_k\|^2
    ,  \\
    & \ \text{s.t.}  \ \
      0 \leq p_k \leq P_{\text{max}}, ~ \forall k, ~ \sum_{k=1}^K p_k  \leq P_{\text{sum}}.
  \end{align}
\end{subequations}
Denoting the optimal solution to $\mathrm{P}_{3b}^{[i]}$ as $\mathbf{p}^*$, we then set 
$\mathbf{p}^{[i]}=\mathbf{p}^*$. 
This completes one iteration round. 
By setting $i\leftarrow i+1$, we can proceed to solve the problem $\mathrm{P}_{3a}^{[i+1]}$, and the process continues until the $i$ reaches the maximum number of iterations, i.e., $i=\mathcal{I}$.

\subsection{Majorization Minimization}
\label{sec:mm}

In the iterative procedure \eqref{iter}, $\mathrm{P}_{3b}^{[i]}$ is a linearly constrained quadratic optimization problem and can be optimally solved by off-the-shelf software (e.g., CVXPY). 
However, $\mathrm{P}_{3a}^{[i]}$ is a nonconvex optimization problem, since the function $\varphi_1$ in \eqref{P3a_1} and the function $\Phi_k$ in \eqref{P3a_2} are nonconvex. 
To tackle these functions, we propose to leverage the framework of MM, which constructs a sequence of upper bounds $\{\widehat{\varphi}_1\}$ on $\varphi_1(\mathbf x)$ and replaces $\varphi_1(\mathbf x)$ in $\rm{P}3$ with $\{\widehat{\varphi}_1\}$ to obtain the surrogate problems. 
Similarly, MM would also construct upper bounds $\{\widehat{\Phi}_k\}$ on 
$\Phi_k$ and replace $\Phi_k$ in $\rm{P}3$ with $\{\widehat{\Phi}_k\}$.
More specifically, given any feasible solution $\{\mathbf{x}^\star,\bm{\xi}^\star\}$ to $\mathrm{P}3$, we define surrogate functions
\begin{align}
\widehat{\varphi}_1(\mathbf x|\mathbf x^\star ) &= 
\sum_{k=1}^K 
\left(
\frac{1}{\beta}x_{k}-\frac{2}{\beta}x_{k}^{\star}x_{k}+\frac{1}{\beta}x_{k}^{\star^2}
\right),
\end{align}
\begin{align}
&\widehat{\Phi}_{k}(\mathbf{x},\bm{\xi}|\bm{\xi}^\star)
=\eta_kx_k-\frac{1}{\mathrm{ln}2}
\Bigg[
\mathrm{ln}\left(\sum_{l=1}^K\frac{H_{k,l}\xi_{l}}{\sigma^2}+1\right)
\nonumber\\
&\quad
{}
-
\mathrm{ln}\left(\sum_{l=1,l\neq k}^K\frac{H_{k,l}\xi^\star_{l}}{\sigma^2}+1\right)-\left(\sum_{l=1,l\neq k}^K\frac{H_{k,l}\xi^\star_{l}}{\sigma^2}+1\right)^{-1}
\nonumber
\\&\quad
{}
\times\left(\sum_{l=1,l\neq k}^K\frac{H_{k,l}\xi_{l}}{\sigma^2}+1\right)
+1
\Bigg]. \label{Phi}
\end{align}
and the following proposition can be established.

\begin{algorithm}[!t]
    \caption{PAMM for EGS Full Transmission}
    \label{alg:PAMM}
        \begin{algorithmic}[1]
            \State \textbf{Input} $\{H_{k,j}\}$, $\sigma^2$, $T$, $T_0$, $\mathcal{D}$, $\tilde{\mathcal{D}}$, $\mathcal{S}^{'}$, $P_{\text{sum}}$, $P_{\text{max}}$, $B_k$, $V_k$.
            \State \textbf{Initialize} approximate GS loss $\tilde{\pi}_k$ according to \eqref{appgs}.
            \State \textbf{Initialize} $\beta$, $\gamma$, and $\{\mathbf{p}^{[0]},\mathbf{x}^{[0]},\bm{\xi}^{[0]}\}$, and set $i=0$.
            \State \textbf{Repeat}
            \State \ \ \  Set $n=0$ and $(\mathbf{x}[0],\bm{\xi}[0])=(\mathbf{x}^{[n]},\bm{\xi}^{[0]})$:
            \State \ \ \ \ \ \  \textbf{Repeat}
            \State \ \ \ \ \ \ \ \ \  Solve $\mathsf{P}_{3a}^{[i]}[n+1]$ and obtain $\{\mathbf{x}^*,\bm{\xi}\}$.
            \State \ \ \ \ \ \ \ \ \  Update $\{\mathbf{x}{[n+1]}=\mathbf{x}^*,
            \bm{\xi}{[n+1]}=\bm{\xi}^*\}$.
            \State \ \ \ \ \ \ \ \ \  Update $n\leftarrow n+1$.
            \State \ \ \ \ \ \   \textbf{Until} $n=\mathcal{J}$.
            \State \ \ \ \ \ \  Update $(\mathbf{x}^{[i+1]},
            \bm{\xi}^{[i+1]})=(\bm{\xi}[\mathcal{J}],\mathbf{x}[\mathcal{J}])$.
            \State \ \ \   Solve $\mathsf{P}_{3b}^{[i]}$ using CVXPY and obtain $\mathbf{p}^*$.  
            \State \ \ \   
            Update $\mathbf{p}^{[i+1]}\leftarrow \mathbf{p}[\mathcal{J}]$
             \State \ \ \   
            Update $i \leftarrow  i+1$.
            \State \textbf{Until} $i=\mathcal{I}$ and the converged solution is $\{\mathbf{x}^\diamond,\mathbf{p}^\diamond,\bm{\xi}^\diamond\}$.
            \State \textbf{Output} $\{\mathbf{x}^\diamond,\mathbf{p}^\diamond\}$.
        \end{algorithmic}
\end{algorithm}

\begin{proposition}
The functions $\{\widehat{\varphi}_1,\widehat{\Phi}_k\}$ satisfy the following:

\noindent(i) Upper bound: 
$$\widehat{\varphi}_1(\mathbf x|\mathbf x^\star )\geq \varphi_1(\mathbf{x}), \ \widehat{\Phi}_{k}(\mathbf{x},\bm{\xi}|\bm{\xi}^\star)\geq 
{\Phi}_{k}(\mathbf{x},\bm{\xi})$$

\noindent(ii) Convexity: $\widehat{\varphi}_1(\mathbf x|\mathbf x^\star )$ is convex in $\mathbf{x}$, and $\widehat{\Phi}_{k}(\mathbf{x},\bm{\xi}|\bm{\xi}^\star)$ is convex in $(\mathbf{x},\bm{\xi})$.

\noindent(iii) Local equivalence: 
\begin{align}
    \widehat{\varphi}_1(\mathbf x^\star|\mathbf x^\star )&= \varphi_1(\mathbf{x}^\star),
    \nonumber\\
    \nabla_{\mathbf{x}}\widehat{\varphi}_1(\mathbf x^\star|\mathbf x^\star ) &= \nabla_{\mathbf{x}}\varphi_1(\mathbf{x}^\star),
        \nonumber\\
    \widehat{\Phi}_{k}(\mathbf{x}^\star,\bm{\xi}^\star|\bm{\xi}^\star) &= {\Phi}_{k}(\mathbf{x}^\star,\bm{\xi}^\star),
    \nonumber\\
    \nabla_{(\mathbf{x},\bm{\xi})}\widehat{\Phi}_{k}(\mathbf{x}^\star,\bm{\xi}^\star|\bm{\xi}^\star) &= \nabla_{(\mathbf{x},\bm{\xi})} {\Phi}_{k}(\mathbf{x}^\star,\bm{\xi}^\star).
\end{align}

\end{proposition}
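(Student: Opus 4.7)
The plan is to verify each of the three clauses separately for the two surrogates, relying on the observation that $\widehat{\varphi}_1$ and $\widehat{\Phi}_k$ are both standard MM surrogates obtained by a difference-of-convex (DC) decomposition of the original function followed by linearizing \emph{only} the concave piece at the current iterate. Each property then collapses to either a first-order tangent inequality for a concave function or a one-line algebraic substitution.

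For $\widehat{\varphi}_1$, I would first rewrite $\varphi_1(\mathbf{x}) = \frac{1}{\beta}\sum_k(x_k - x_k^2)$, identify $-x_k^2$ as the concave component, and apply the tangent upper bound $-x_k^2 \le -2x_k^\star x_k + x_k^{\star 2}$ coordinatewise; summing recovers $\widehat{\varphi}_1(\mathbf{x}|\mathbf{x}^\star)$ term-by-term, so (i) follows from the identity $\widehat{\varphi}_1 - \varphi_1 = \frac{1}{\beta}\sum_k(x_k-x_k^\star)^2 \ge 0$. Clause (ii) is immediate because $\widehat{\varphi}_1$ is affine in $\mathbf{x}$. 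For (iii), setting $\mathbf{x}=\mathbf{x}^\star$ collapses the quadratic correction to zero, and componentwise differentiation yields $\partial_{x_k}\widehat{\varphi}_1 = \frac{1}{\beta}(1 - 2x_k^\star) = \partial_{x_k}\varphi_1(\mathbf{x})|_{\mathbf{x}=\mathbf{x}^\star}$.

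The substantive step is $\widehat{\Phi}_k$. I would first recast the rate as a log-ratio: defining $A_k(\bm{\xi}) = \sum_l H_{k,l}\xi_l/\sigma^2 + 1$ and $B_k(\bm{\xi}) = \sum_{l\ne k} H_{k,l}\xi_l/\sigma^2 + 1$, one has $\Phi_k = \eta_k x_k - \frac{1}{\ln 2}[\ln A_k(\bm{\xi}) - \ln B_k(\bm{\xi})]$, which exhibits a natural DC split with convex part $\eta_k x_k - \frac{1}{\ln 2}\ln A_k(\bm{\xi})$ (since $-\ln$ composed with a nonnegative affine map is convex) and concave-in-$\bm{\xi}$ part $\frac{1}{\ln 2}\ln B_k(\bm{\xi})$. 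Linearizing the concave piece at $\bm{\xi}^\star$ via $\ln B_k(\bm{\xi}) \le \ln B_k(\bm{\xi}^\star) + B_k(\bm{\xi}^\star)^{-1}[B_k(\bm{\xi})-B_k(\bm{\xi}^\star)] = \ln B_k(\bm{\xi}^\star) + B_k(\bm{\xi})/B_k(\bm{\xi}^\star) - 1$ and substituting back reproduces the expression \eqref{Phi} verbatim, giving (i). Convexity (ii) then follows because the surrogate is a sum of $\eta_k x_k$ (linear), $-\frac{1}{\ln 2}\ln A_k(\bm{\xi})$ (convex), and terms linear in $\bm{\xi}$. For (iii), evaluating at $\bm{\xi}=\bm{\xi}^\star$ makes $B_k(\bm{\xi})/B_k(\bm{\xi}^\star) - 1$ vanish, recovering $\Phi_k(\mathbf{x}^\star,\bm{\xi}^\star)$; and the only difference between $\nabla_{\bm{\xi}}\Phi_k$ and $\nabla_{\bm{\xi}}\widehat{\Phi}_k$ is that the true Jacobian factor $B_k(\bm{\xi})^{-1}\nabla B_k$ is replaced by the frozen slope $B_k(\bm{\xi}^\star)^{-1}\nabla B_k$, and these coincide at $\bm{\xi}=\bm{\xi}^\star$; the $\mathbf{x}$-gradients trivially match since $\widehat{\Phi}_k$ and $\Phi_k$ share the same linear $\eta_k x_k$ term.

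The main obstacle I anticipate is purely bookkeeping: one has to track carefully the constants and linear terms introduced by the tangent bound and confirm that their algebraic rearrangement matches \eqref{Phi} exactly, in particular the $+1$ convention inside the brackets and the sign accompanying $B_k(\bm{\xi})/B_k(\bm{\xi}^\star)$. Once this matching is made explicit, every remaining step is a direct invocation of the tangent inequality for a concave function together with the standard rules for preservation of convexity under affine composition and summation.
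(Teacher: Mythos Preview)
Your proposal is correct and follows exactly the canonical MM construction that the paper's surrogate definitions encode: a DC split of $\varphi_1$ and $\Phi_k$ followed by first-order linearization of the concave piece at the current iterate, with (i)--(iii) then reducing to the tangent inequality for concave functions and direct substitution. The paper defers its proof to a supporting document, but since $\widehat{\varphi}_1$ and $\widehat{\Phi}_k$ are precisely the affine majorants arising from that linearization, your argument is necessarily the same as the intended one; the only caveat is cosmetic---your claim that $\widehat{\varphi}_1$ is ``affine'' slightly understates it (it is affine in $\mathbf{x}$ for fixed $\mathbf{x}^\star$, which is all that is needed), and the bookkeeping you flag for matching \eqref{Phi} indeed goes through verbatim.
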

\begin{proof}
See Appendix.
\end{proof}
With part (i) of \textbf{Proposition 1}, an upper bound can be directly obtained if we replace the functions $\{\varphi_1,\Phi_k\}$ by $\{\widehat{\varphi}_1,\widehat{\Phi}_k\}$ around a feasible point.
However, a tighter upper bound can be achieved if we treat the obtained solution as another feasible point and continue to construct the next-round surrogate function.
In particular, assuming that the solution at the $n^{\mathrm{th}}$ iteration is given by $\{\mathbf{x}{[n]},\mathbf{\xi}{[n]}\}$, the following problem is considered at the $(n+1)^{\mathrm{th}}$ iteration:
\begin{subequations}
  \begin{align}
   & \mathrm{P}_{3a}^{[i]}[n+1]: \min_{\substack{\mathbf{x},\bm{\xi}}} 
      \quad -\sum_{k=1}^{K} x_k \, 
    \tilde{\pi}_k(\mathcal{S}^{'},\tilde{\mathcal{D}}_k)
    +
\widehat{\varphi}_1(\mathbf x|\mathbf x{[n]})
    \nonumber\\
    &\quad\quad \
    +
    \gamma
    \sum_{k=1}^{K} \|\xi_{k}-x_kp_k^{[i-1]}\|^2
      \label{P3a_1} \\
    & \ \text{s.t.} \ \
 \widehat{\Phi}_{k}(\mathbf{x},\bm{\xi}|\bm{\xi}{[n]})\leq 0, \ 0\leq x_k \leq 1, ~ \forall k
    \label{P3a_2}
  \end{align}
\end{subequations}

Based on part (ii) of \textbf{Proposition 1}, the problem $\mathrm{P}_{3a}^{[i]}[n+1]$ is convex and can be solved by off-the-shelf software packages (e.g., Mosek) for convex programming.
Denote its optimal solution as
$\{\mathbf{x}^*,\bm{\xi}\}$. 
Then we set
$\mathbf{x}^{[n+1]}=\mathbf{x}^*$ and $\{\bm{\xi}{[n+1]}=\bm{\xi}^*\}$, such that the process repeats with solving the problem $\mathrm{P}_{3a}^{[i]}[n+2]$.
According to part (iii) of \textbf{Proposition 1} and \cite[Theorem 1]{{sun2016majorization}}, every limit point of the sequence
$(\mathbf{x}{[0]},\bm{\xi}{[0]}),(\mathbf{x}{[1]},\bm{\xi}{[1]}),\cdots)$ is a stationary point to $\mathrm{P}_{3a}^{[i]}$ as long as the starting point $\{\mathbf{x}{[0]},\bm{\xi}{[0]}\}$ is feasible to $\mathrm{P}_{3a}^{[i]}$.

\subsection{Algorithm and Complexity}

The entire procedure of the PAMM method is summarized in Algorithm~\ref{alg:PAMM}.
In terms of computational complexity, $\mathrm{P}_{3a}^{[i]}[n+1]$ involves $2K$ variables.
Therefore, the worst-case complexity for solving $\mathrm{P}_{3a}^{[i]}[n+1]$ is
$\mathcal{O}\big((2K)^{3.5}\big)$.
Consequently, the total complexity for solving $\mathrm{P}_{3a}^{[i]}$ is $\mathcal{O}\big(\mathcal{J}\,(3K)^{3.5}\big)$, where $\mathcal{J}$ is the number of iterations needed for the MM algorithm to converge.
On the other hand, since $\mathrm{P}_{3b}^{[i]}[n+1]$ involves $K$ variables, the worst-case complexity for solving $\mathrm{P}_{3b}^{[i]}$ is
$\mathcal{O}\big(K^{3.5}\big)$.
Based on the above analysis, the total complexity for solving $\mathrm{P}_{3}$ exploiting PAMM is 
$\mathcal{O}\big(\mathcal{J}\,(3K)^{3.5}+K^{3.5}\big)$. 

To further validate its practical scalability, we profiled the execution time versus the number of clients $K$. The result is shown in Fig.~\ref{fig:execution_time_vs_clients_number}, and we have the following key observations.

\begin{itemize}
    \item At $K=5$ (the setting adopted in our main experiments),  the average solving time is approximately $18$\, seconds.
    \item If we scale up to $K=11$, the solving time becomes $43$\,s, which still remains feasible for real applications.
    \item If $K=18$, the solving time exceeds $320$\,s, which requires faster optimization. This can be realized via hierarchical grouping based on spatial correlation and GPU-accelerated computation.
    \item The function of execution time w.r.t. the number of clients exhibits a polynomial relationship.
\end{itemize}
The above results demonstrate that the proposed PAMM algorithm is computationally feasible for a tens of clients, a typical scale in edge computing.

\begin{figure}[!t]
    \centering
    \begin{tabular}{@{}c@{\hspace{0.01\textwidth}}c@{}}
    \begin{minipage}[t]{0.48\columnwidth}
        \centering
        \includegraphics[width=\linewidth, keepaspectratio]{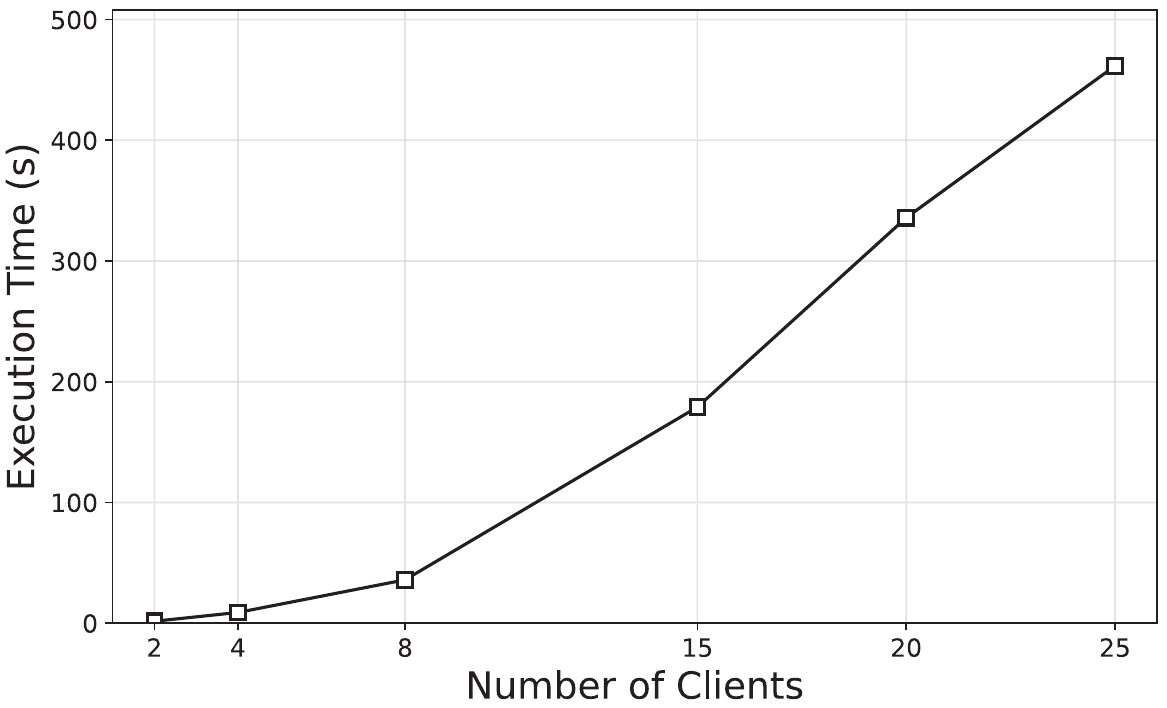}
        \caption{Execution time vs. $K$.}
        \label{fig:execution_time_vs_clients_number}
    \end{minipage} &
    \begin{minipage}[t]{0.48\columnwidth}
        \centering
        \includegraphics[width=\linewidth, keepaspectratio]{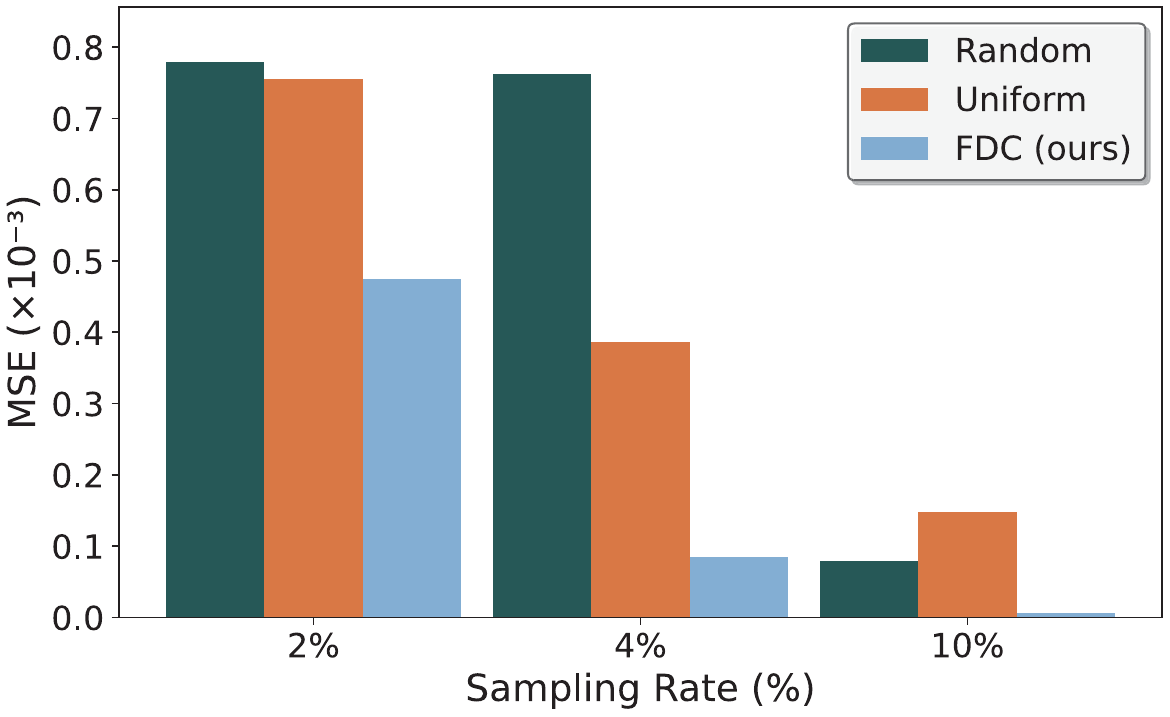}
        \caption{MSE vs. $\rho$.}
        \label{prediction_different_rate_Bar.pdf}
    \end{minipage}
    \end{tabular}
\end{figure}

\begin{figure*}[t]
    \centering
    \setlength{\tabcolsep}{0pt} 
    \begin{tabular}{@{}c@{\hspace{0.5mm}}c@{\hspace{0.5mm}}c@{}} 
    \begin{minipage}[t]{0.32\textwidth}
        \centering
        \includegraphics[width=0.99\linewidth]{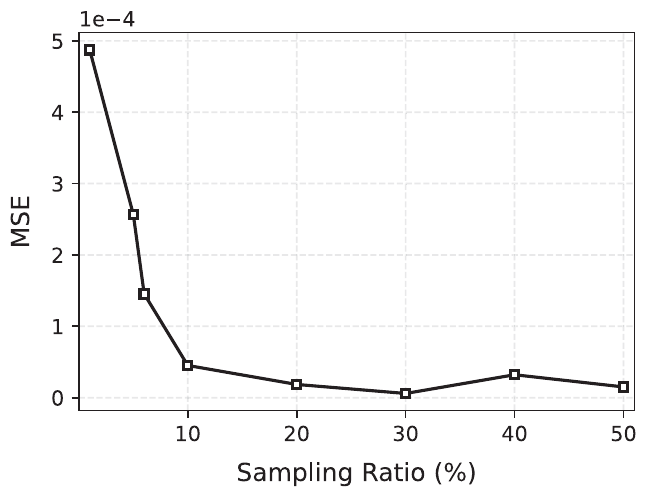}
        \captionof{figure}{MSE vs sampling ratio $\rho$.}
        \label{fig:mse_predicted_loss_and_true_loss}
    \end{minipage} &
    \begin{minipage}[t]{0.31\textwidth}
        \centering
        \includegraphics[width=0.99\linewidth]{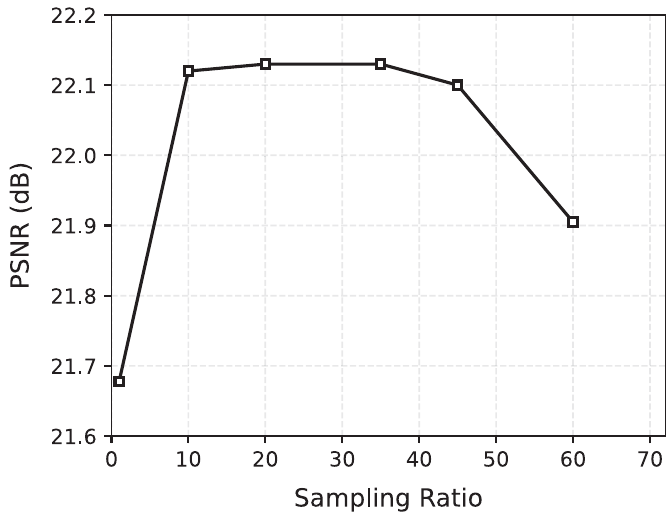}
        \captionof{figure}{PSNR vs sampling ratio $\rho$.}
        \label{fig:psnr_under_sampling_ratio}
    \end{minipage} &
    \begin{minipage}[t]{0.35\textwidth}
        \centering
        \includegraphics[width=0.99\linewidth]{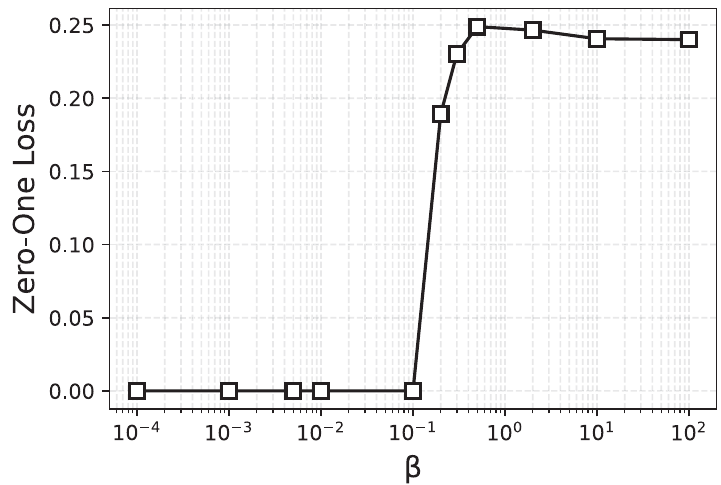}
        \captionof{figure}{Zero-one loss vs $\beta$ ($K=5$).}
        \label{fig:zero_one_loss}
    \end{minipage}
    \end{tabular}
\end{figure*}

\section{Experiments}\label{section6}

\begin{figure}[t!]
    \centering
    \begin{subfigure}[t]{0.32\linewidth}
        \centering
        \includegraphics[width=0.95\linewidth]{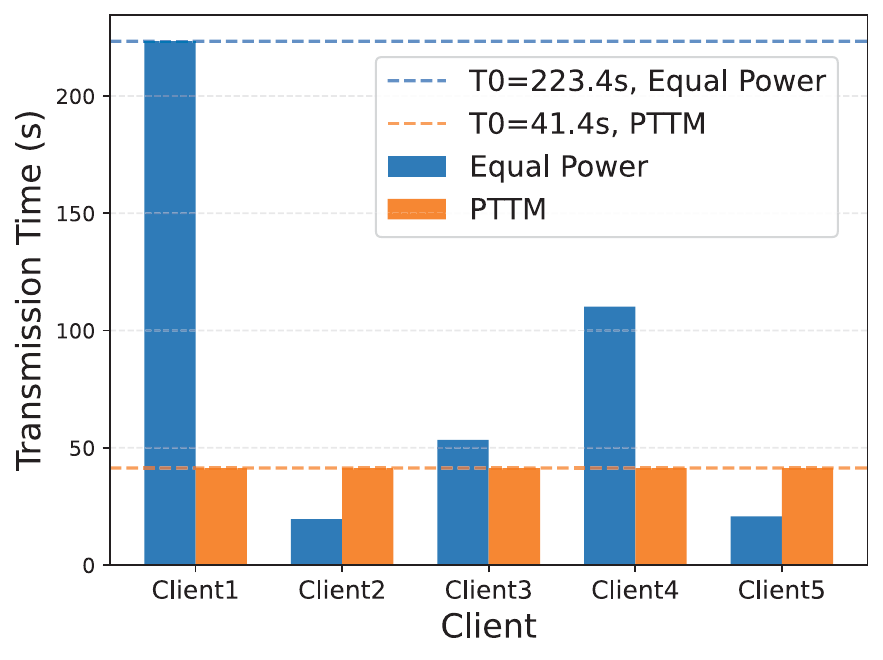}
        \caption{Pilot time cost}
        \label{fig:pttm_ep_time}
    \end{subfigure}
    \hspace{-0.5em}
    \begin{subfigure}[t]{0.32\linewidth}
        \centering
        \includegraphics[width=0.95\linewidth]{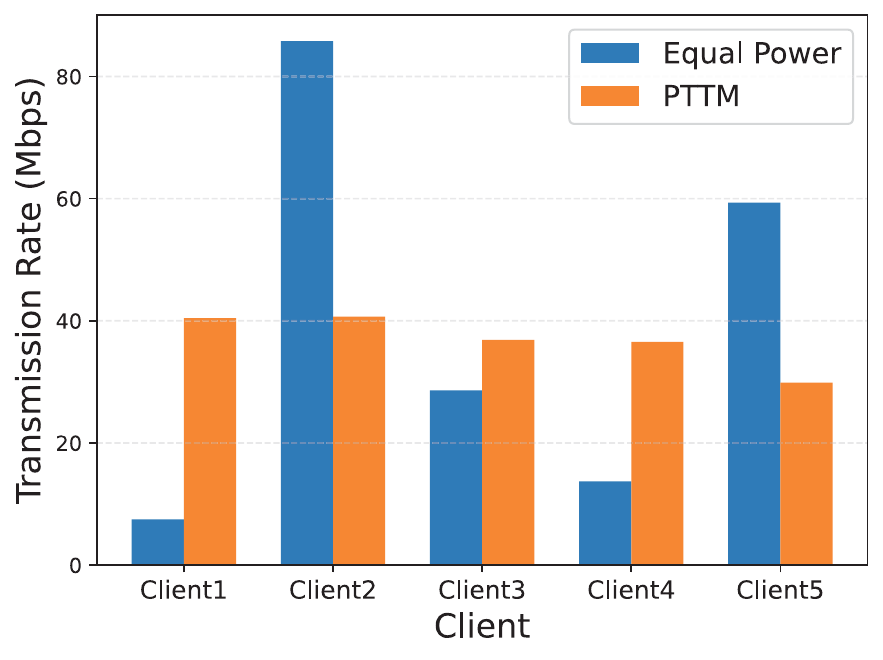}
        \caption{Rate profile}
        \label{fig:pttm_ep_rate}
    \end{subfigure}
    \hspace{-0.5em}
    \begin{subfigure}[t]{0.32\linewidth}
        \centering
        \includegraphics[width=0.95\linewidth]{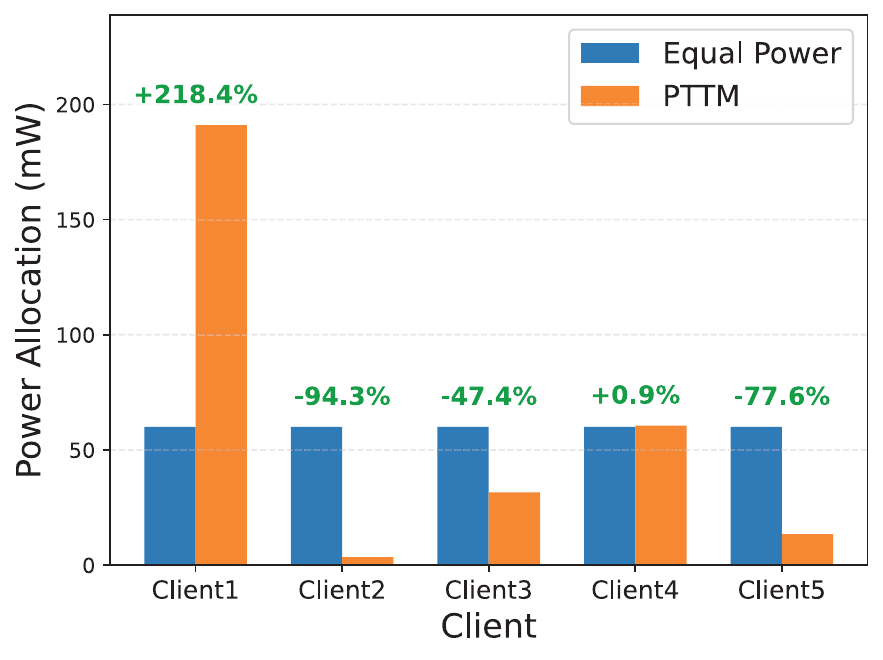}
        \caption{Power profile}
        \label{fig:pttm_ep_power}
    \end{subfigure}
    \caption{Performance comparison between the proposed PTTM and equal power algorithms.}
    \label{fig:first_stage_comp}
\end{figure}

\begin{figure}[!t]
  \centering 
  \includegraphics[width=0.45\columnwidth]{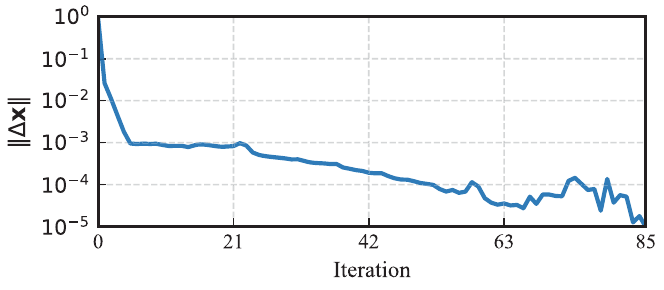} 
  \label{fig:convergence_x}
  \includegraphics[width=0.45\columnwidth]{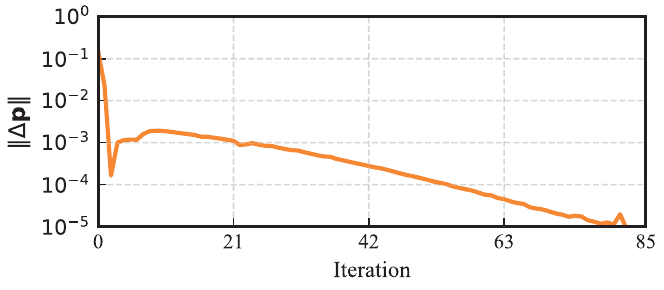} 
    \label{fig:convergence_p}
  \caption{$\|\Delta \mathbf{x}\|$ and $\|\Delta \mathbf{p}\|$ versus n.} 
  \label{fig:convergence}
\end{figure}

We implement the proposed STT-GS system with FDC, PTTM, and PAMM algorithms in Python based on the 3D GS project \cite{kerbl20233d}. The system is deployed on an Ubuntu workstation equipped with a 3.75 GHz AMD EPYC 16-core CPU and an NVIDIA RTX 4090 GPU.
We consider the case of $N=64$ and $K=5$, where all clients are randomly distributed within a $100 \times 100\,\text{m}^2$ area, while the edge server is located at the center $(0, 0)$.
Then we compute the distances $\{d_k\}$ for all $K$ based on the locations of clients and server.
The pathloss exponent $\alpha = 3$ and the shadow fading $\omega_k = -20$\,dB. 
The noise power $\sigma^2 = -120 \, \text{dBm}$.
The total time budget is $T=350$\,s.
The total power is set to $P_{\text{sum}}=300$\,mW and the maximum power is $P_{\text{max}}=200$\,mW according to \cite{yu2004iterative,zhang2024efficient,schubert2004solution,al2011achieving,beck20141,wang2020learning,wang2020machine,yoo2019learning}.
The system bandwidth is set to $B=10\, \text{MHz}$.
Simulation parameters are summarized in Table~\ref{TableP}, where the channel parameters are set according to \cite{wang2020angle}.

\begin{table}[!t]
\centering
\caption{Simulation Parameters}
\label{TableP}
\scalebox{0.8}{
\begin{tabular}{|c|c|c|}
\hline
\textbf{Parameter} & \textbf{Description} & \textbf{Value} \\
\hline
$N$ & Number of antennas at server & 64 \\
\hline
$K$ & Total number of clients & 5 \\
\hline
$T$ & Time budget & $350\,\textrm{s}$ \\
\hline
$P_\textrm{max}$ & Maximum transmit power of each client & $200\,\textrm{mW}$ \\
\hline
$P_\textrm{sum}$ & Sum transmit power of all clients & $300\,\textrm{mW}$ \\
\hline
$\sigma^2$ & Noise power & -100\,dBm \\
\hline
$ K_\textrm{Ric}$ & The Rician K-factor & -26\,dB \\
\hline
\end{tabular}
}
\end{table}

Experiments are conducted exploiting the rubble-pixsfm LAE dataset \cite{turki2022mega}, which consists of $1680$ real-world images.
The dataset is divided into $6$ batches, where $5$ batches are adopted as training data at the $5$ clients and the remaining batch is reserved for testing.
The data volumes at clients 1--5 are: $2091.26$\,MB, $2103.93$\,MB, $1906.72$\,MB, $1891.08$\,MB, and $1544.17$\,MB, respectively. 
We randomly choose one of the $5$ batches to train the initial model $\mathcal{S}'$ at the edge server.

\begin{figure*}[!t]    \centering
\includegraphics[width=0.95\textwidth]{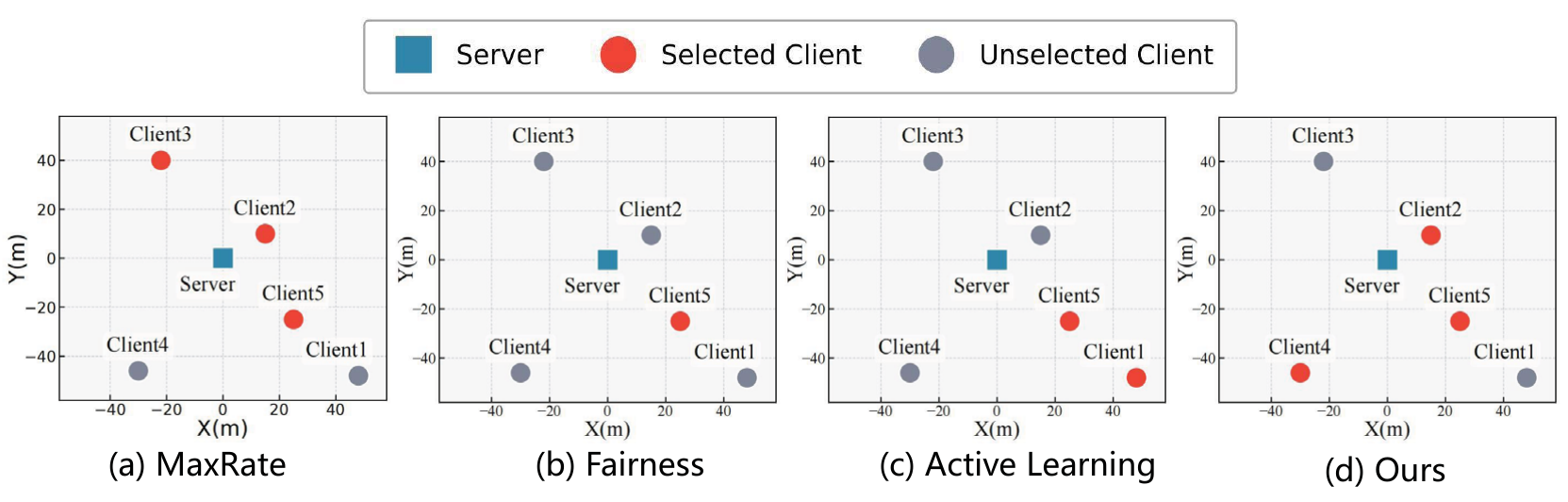}
    \caption{Comparative analysis of client selection strategies: 
(a) \textbf{MaxRate} prioritizes proximal clients $\{2,3,5\}$ for channel efficiency; 
(b) \textbf{Fairness} selects $\{2,5\}$ for equitable resource distribution; 
(c) \textbf{Active Learning} targets $\{1,5\}$ for maximum information gain; 
(d) \textbf{Proposed STT-GS} optimizes $\{2,4,5\}$ by balancing view contributions and channel conditions.}
    \label{fig:client-selection}
\end{figure*}

\begin{figure*}[!t] 
    \centering
    \setlength{\tabcolsep}{2pt}  
    \begin{tabular}{ccc}
    \includegraphics[clip,width=0.98\textwidth]{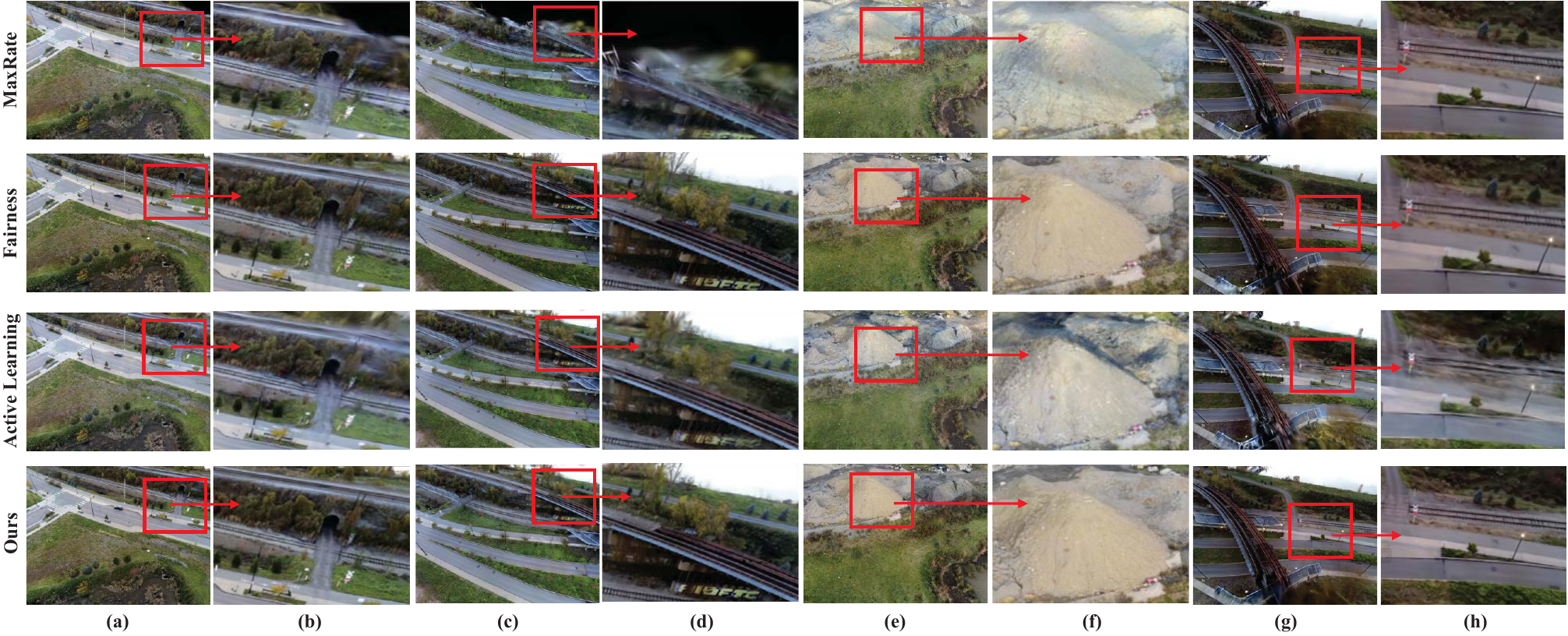} \\
    \end{tabular}
    \caption{Comparison of rendering qualities in four views of the rubble-pixsfm dataset.}
    \label{fig:visual_comparison}
\end{figure*}

\begin{figure*}[t] 
  \centering
  \begin{subfigure}[t]{0.32\textwidth}
    \centering
    \includegraphics[width=\textwidth]{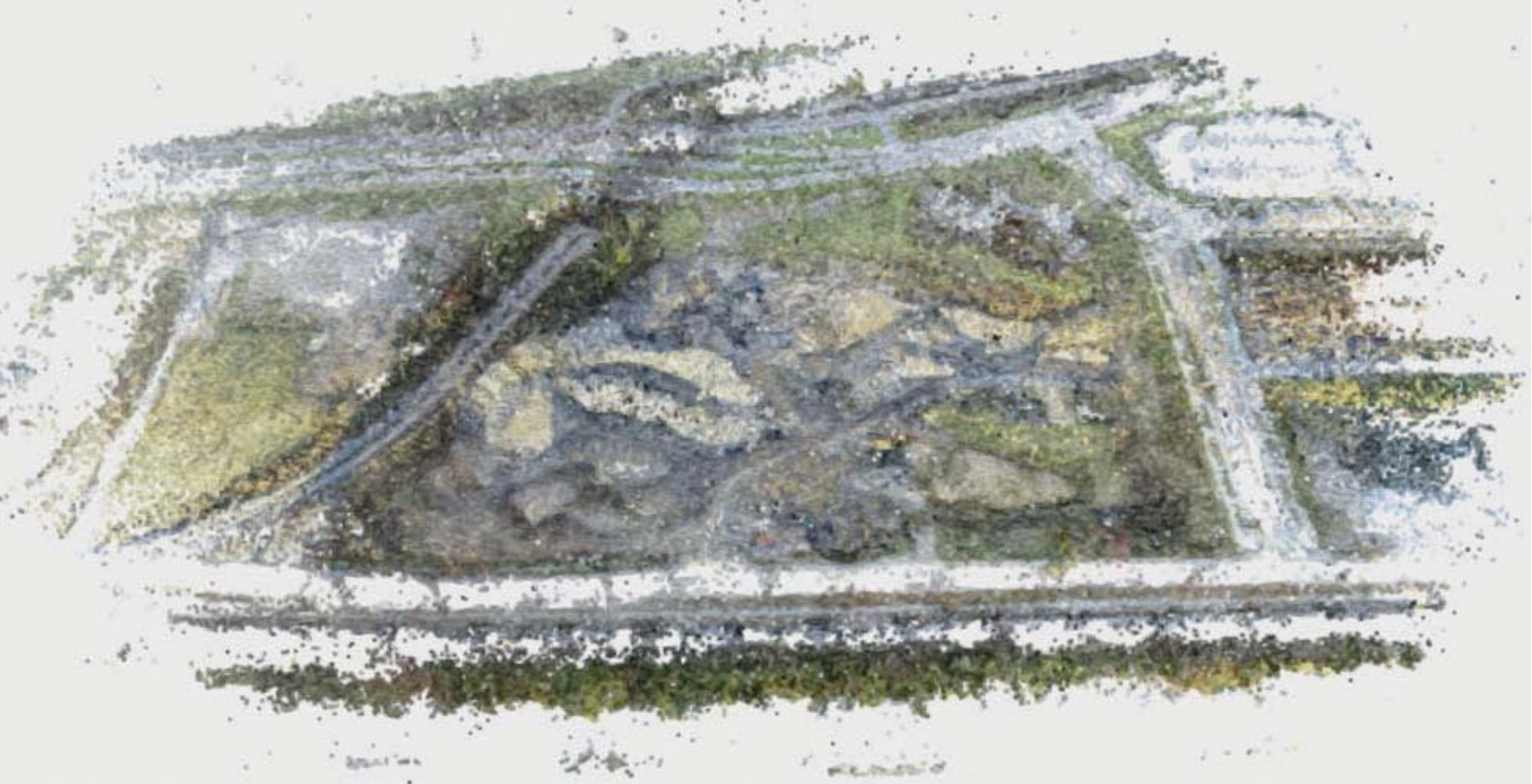}
    \caption{Point cloud map}
    \label{fig:point_cloud}
  \end{subfigure}
  \hfill
  \begin{subfigure}[t]{0.32\textwidth}
    \centering
    \includegraphics[width=\textwidth]{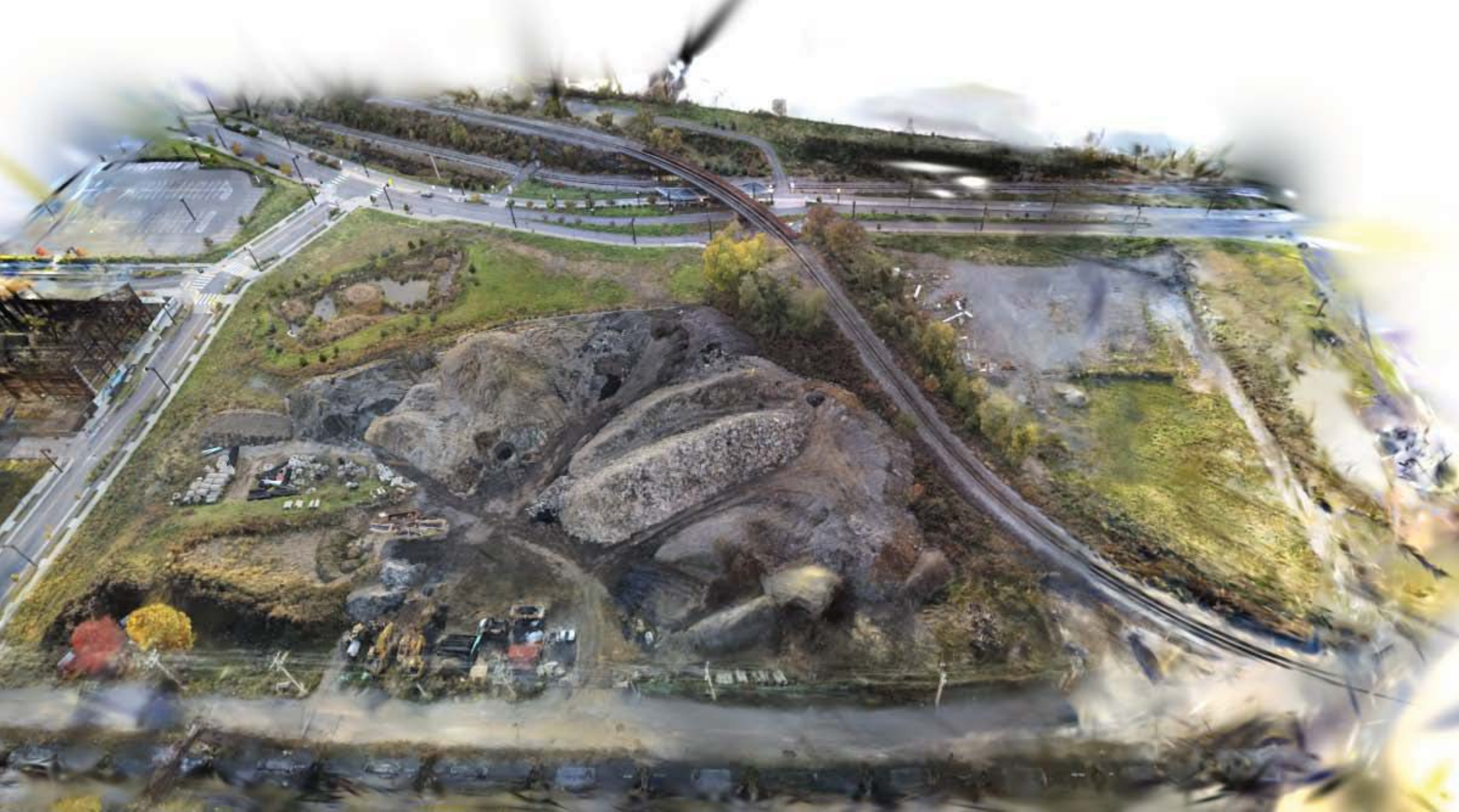}
    \caption{3D GS model (view 1)}
    \label{fig:3dgs_view1}
  \end{subfigure}
  \hfill
  \begin{subfigure}[t]{0.30\textwidth}
    \centering
    \includegraphics[width=\textwidth]{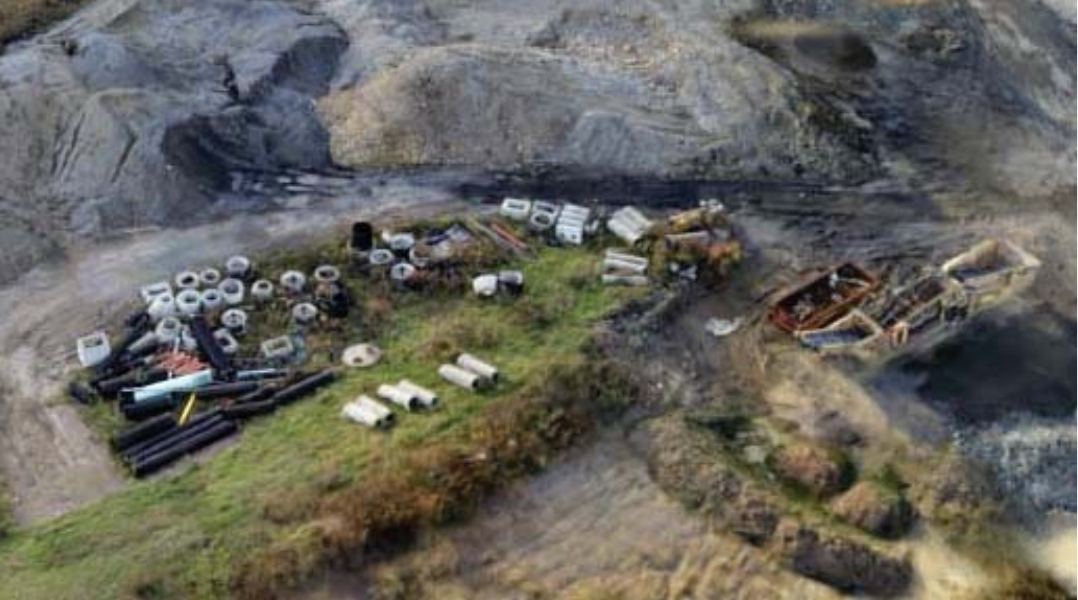}
    \caption{3D GS model (view 2)}
    \label{fig:3dgs_view2}
  \end{subfigure}
  
  \caption{Visualization of the proposed 3D GS model.}
  \label{fig:3d_reconstruction}
\end{figure*}

\subsection{Implementation Details}

\subsubsection{GS Models}

Our implementation builds directly on the open-source 3D GS repository by Kerbl \emph{et al.} \cite{kerbl20233d}, specifically utilizing the \texttt{3dgs-accel} branch of the \texttt{diff-gaussian-rasterization} submodule for accelerated differentiable rendering. 

\subsubsection{Training Configurations}

Hyperparameters and training settings are as follows:
\begin{itemize}
    \item \textbf{Optimizer}: Adam
    \item \textbf{Learning Rate}:
    \begin{itemize}
        \item Position: Initial rate $1.6\times10^{-4}$, exponentially decaying to $1.6\times10^{-6}$ with a discount factor of 0.01
        \item Rotation: $1.0\times10^{-3}$
        \item Scaling: $5.0\times10^{-3}$
        \item Opacity: $2.5\times10^{-2}$
        \item Spherical harmonics features: $2.5\times10^{-3}$
        \item Exposure parameters: Initial rate $1.0\times10^{-2}$, decaying to $1.0\times10^{-3}$ after specified delay steps
    \end{itemize}
    \item \textbf{Batch Size}: 1 (single camera viewpoint per iteration)
\end{itemize}

\subsubsection{Initial Model $\mathcal{S}'$}

The initial model $\mathcal{S}'$ was pretrained on the third partition of the rubble-pixsfm dataset used in the current experiments.
A sparse 3D point cloud was first reconstructed using COLMAP \cite{schonberger2016structure} via feature extraction, image matching, and triangulation from the RGB images in the selected partition. This SfM-based reconstruction provided the initial 3D keypoints for Gaussian placement.
Subsequently, the 3DGS model 
 $\mathcal{S}'$ was trained using the official implementation \cite{kerbl20233d}, with isotropic Gaussians initialized at the reconstructed keypoints. Training was performed for 40,000 iterations with a spatial resolution factor of 8, under standard settings (Adam optimizer, mixed precision). No client-specific weighting or adaptive sampling was applied during this pretraining phase.

\subsection{Evaluation of EGS Loss Prediction}

First, we conduct experiments to evaluate the performance of FDC in Algorithm 1. Fig.~\ref{prediction_different_rate_Bar.pdf} illustrates the mean squared errors (MSEs) between ground-truth and predicted losses, under different sampling ratios.
The proposed FDC achieves high prediction accuracy even at low sampling rates, \textcolor{black}{as the sampling procedure accounts for the significance of each image within the feature domain}. Consequently, it consistently achieves the smallest MSEs among all the simulated schemes across all sampling ratios (i.e., 2\%, 4\%, 10\%).

To obtain deeper insights into the impact of sampling ratios, Fig.~\ref{fig:mse_predicted_loss_and_true_loss} shows the MSE between predicted and actual rendering losses for at larger sampling ratios.
It can be seen that the prediction loss decreases as the sampling ratio increases. 
This implies that the distributions of $\tilde{\mathcal{D}}_k$ and $\mathcal{D}_k$ become closer with more pilot data. 
Furthermore, with our FDC, the prediction accuracy becomes close to zero when $\rho\geq10\%$. 
This demonstrates that low pilot overhead is achievable by using the proposed FDC method.

Fig. \ref{fig:psnr_under_sampling_ratio} presents the Peak Signal-to-Noise Ratio (PSNR) on the test dataset under different sampling ratios $\rho$. At $\rho < 10\%$, PSNR is low due to inaccurate loss prediction and suboptimal client selection. As $\rho$ increases to 10\%, PSNR reaches its peak and remains stable until $\rho=45\%$. 
However, if $\rho > 45\%$, PSNR decreases due to excessive pilot overhead in the first transmission phase, which reduces the time for the second-phase data transmission. 
This implies that there exists a trade-off relationship between sampling ratio and GS performance, and a proper sampling ratio can be determined by cross-validation.
Based on these experiments, we set $\rho=10\%$ in the subsequent experiments.

\begin{table}[!t]
    \centering
    \caption{Predicted Loss of Different Clients at $\rho_k=0.1$} 
    \label{Table-comparison}
    \vspace{0.1cm}
    \footnotesize 
    \setlength{\tabcolsep}{4pt} 
    \scalebox{1.1}{
    \begin{tabular}{lcccc}
        \toprule
        Client & \makecell{Ground Truth \\ Loss} & 
                 \makecell{Prediction \\ FDC (ours)} & 
                 \makecell{Prediction \\ Random} & 
                 \makecell{Prediction \\ Uniform} \\
        \midrule
        Client 1 & 0.38032 & 0.38159 & 0.39110 & 0.36226 \\
        Client 2 & 0.26530 & 0.26150 & 0.25352 & 0.26831 \\
        Client 3 & 0.02535 & 0.02561 & 0.02323 & 0.02381 \\
        Client 4 & 0.21635 & 0.21864 & 0.21256 & 0.19878 \\
        Client 5 & 0.31689 & 0.31429 & 0.30574 & 0.32664 \\
        \bottomrule
    \end{tabular}
    }
\end{table}

Then, we compare our FDC with two benchmark methods: \textbf{Equal-interval Sampling} and \textbf{Random Sampling} \cite{mayer2020adversarial}. The actual and predicted losses for each client at sampling ratio $\rho=10\%$ are summarized in Table~\ref{Table-comparison}.
The results indicate that the predicted losses of FDC for all clients closely match the ground-truth loss very well. 
Furthermore, FDC yields the smallest loss discrepancy compared to the other methods. 
Specifically, the prediction loss error with FDC is within 1.4\%, while other methods may involve over 10\% discrepancy \textcolor{black}{due to restricted feature diversity of the sampled data}. These findings confirm that FDC provides a more reliable and accurate loss prediction.

Then, we conduct a numerical experiment to verify the effectiveness of PTTM in Algorithm 2. 
The results of pilot transmission time, data rate, and power allocation are shown in Fig. \ref{fig:first_stage_comp}. It can be seen from Fig. \ref{fig:first_stage_comp}a that with PTTM, the first-stage sampling only costs $T_0=41.4\,$s and reserves over $300\,$s for the EGS full transmission stage. In contrast, with a naive equal power scheme, the pilot transmission stage requires $T_0=223.4\,$s, which consumes over $5\times$ cost compared to PTTM.
 \textcolor{black}{This demonstrates the benefit brought by joint considerations of sampling ratio, data volume, and communication conditions in PTTM.}

\subsection{Evaluation of EGS Full Transmission}

\begin{table}[!t]
\caption{Comparison of Various Image Quality Metrics}
\label{tab:rubble_metric_table}
\centering
\scalebox{0.75}{
\begin{tabular}{l c c c}
    \toprule
    Method & {$\uparrow$PSNR} & {$\uparrow$SSIM} & {$\downarrow$LPIPS} \\
    \midrule
    MaxRate         & 21.1503 & 0.71362 & 0.30658 \\
    Fairness        & 20.5004 & 0.68260 & 0.32154 \\
    Active Learning & 20.7619 & 0.71679 & 0.29347 \\
    Ours          & \textbf{22.1019} (+4.50\%) 
                    & \textbf{0.73347} (+2.78\%) 
                    & \textbf{0.29044} (-5.27\%) \\
    \bottomrule
    \multicolumn{4}{c}{Note: Performance gain is computed against the MaxRate scheme} 
\end{tabular}
}
\end{table}

Next, we conduct numerical experiments to validate the convergence of the proposed Algorithm 3 (i.e., PAMM).
Specifically, the variation between consecutive iterations $\|\Delta\mathbf{x}\|$ and $\|\Delta \mathbf{p}\|$ (with $ \Delta\mathbf{x}=\mathbf{x}^{[n]}-\mathbf{x}^{[n-1]}$ and 
$\Delta \mathbf{p} = \mathbf{p}[n] - \mathbf{p}[n-1]$) versus the number of iterations is shown in Fig.~\ref{fig:convergence}.
It can be seen that both $\|\Delta\mathbf{x}\|$ and $\|\Delta \mathbf{p}\|$ fall below $10^{-4}$ after $60$ iterations.
This demonstrates the convergence of PAMM.

To see the impact of the penalty parameter $\beta$, we have further conducted experiments, and the zero-one loss versus $\beta$ is shown in Fig.~\ref{fig:zero_one_loss}. 
When $\beta\geq 0.1$, the zero-one loss increases and becomes non-negligible, indicating that the binary penalty is too weak and the solution of $\mathbf{x}$ deviates significantly from binary values ($0/1$).
This aligns with the penalty function for the binary constraint  \eqref{eq:penaltyterm1}.
Consequently, $\beta$ needs to be smaller than the threshold $0.1$, such that the optimized zero-one loss tends to zero. 
However, $\beta$ cannot be too small. An excessively small $\beta$ weakens the optimization of the original objective. Therefore, we choose the largest $\beta$ that satisfies the condition of near-zero zero-one loss, which is $\beta = 0.1$ in our setting as shown in Fig.~\ref{fig:zero_one_loss}.

Then, we compare the proposed STT-GS consisting of FDC, PTTM, and PAMM algorithms against the benchmark schemes. 
We consider the following baselines: 1) \textbf{MaxRate}: EGS with water-filling power allocation for sum-rate maximization \cite{zhang2024efficient}; 2) \textbf{Fairness}: EGS with max-min fairness power allocation \cite{zheng2016wireless}; 3) \textbf{Active Learning}: EGS exploiting only active loss prediction, without considering channel conditions \cite{yoo2019learning}.

To conduct a quantitative analysis, we train the 3D GS model using the successfully transmitted client data for $30000$ iterations, and render images corresponding to the camera poses employed in the test dataset. We then compare these rendered images with ground truth images to compute PSNR, LPIPS, and SSIM metrics, as shown in Table~\ref{tab:rubble_metric_table}.
Overall, our method demonstrates superior performance across all metrics (i.e., PSNR, SSIM, LPIPS).
These results corroborates the fact that our method identifies the most valuable sensor data for 3D reconstruction.
More importantly, the benefit of incorporating the GS-oriented objective function outweighs the cost of adding a sampling stage.
This insight justifies the adoption of a two-stage sample-then-transmit pipeline in Section III.
Note that compared to the Active Learning scheme, our proposed scheme improves the PSNR and SSIM by 6.46\% and 2.33\%, respectively.
This finding highlights the benefit of adopting cross-layer optimization.

To gain further insights into the above results, Fig.~\ref{fig:client-selection} provides the user selection results of different schemes in a representative scenario. It can be seen that the MaxRate scheme allocates more powers to clients $\{2, 3, 5\}$, which are the closest users with the most favorable channel conditions. This allocation aligns with its throughput maximization objective, but may result in inefficient use of resources by prioritizing nearby users whose data may be of limited value. The Active Learning scheme selects clients $\{1, 5\}$, whose data yields the highest validation losses (thereby greatest information gains) for GS model $\mathcal{E}^{'}$ as shown in Table~\ref{Table-comparison}.
However, client $1$ is situated far from the server, leading to excessive communication costs. By excluding client $1$, our proposed method is able to connect more clients, specifically clients $\{2, 4, 5\}$, all of which posses high-quality GS data as seen in Table~\ref{Table-comparison}.
This corroborates the fact that our method achieves the best tradeoff between GS gains and communication costs, which also accounts for its superior performance in Table~\ref{Table-comparison}. It is also notable that the Fairness scheme only serves client $5$, since client $5$ possesses the minimum data volume.

To facilitate a more intuitive evaluation of the rendering performance, we also compare the visualization results from different views. 
Specifically, Fig.~\ref{fig:visual_comparison}a, Fig.~\ref{fig:visual_comparison}c, Fig.~\ref{fig:visual_comparison}e, and Fig.~\ref{fig:visual_comparison}g provide the rendered images from various camera poses, while Fig.~\ref{fig:visual_comparison}b, Fig.~\ref{fig:visual_comparison}d, Fig.~\ref{fig:visual_comparison}f, and Fig.~\ref{fig:visual_comparison}h provide the enlarged views of these scenes for detailed inspection.
Our method demonstrates superior performance for all views. 
For instance, the railway and the road of our method in Fig.~\ref{fig:visual_comparison}h are clear, while those of other schemes are blurred. 
To provide a global visualization of our method, the point cloud map and two bird's eye views of our GS model are provided in Fig.~\ref{fig:3d_reconstruction}. 
It can be seen from the point cloud map that our approach guarantees excellent geometry of the scenario.
Furthermore, as seen from the two bird's eye views, our method also guarantees excellent textures and semantics of the scenario.

\begin{figure*}[!t]
    \centering
    \begin{subfigure}[b]{0.33\textwidth}
        \includegraphics[width=\linewidth]{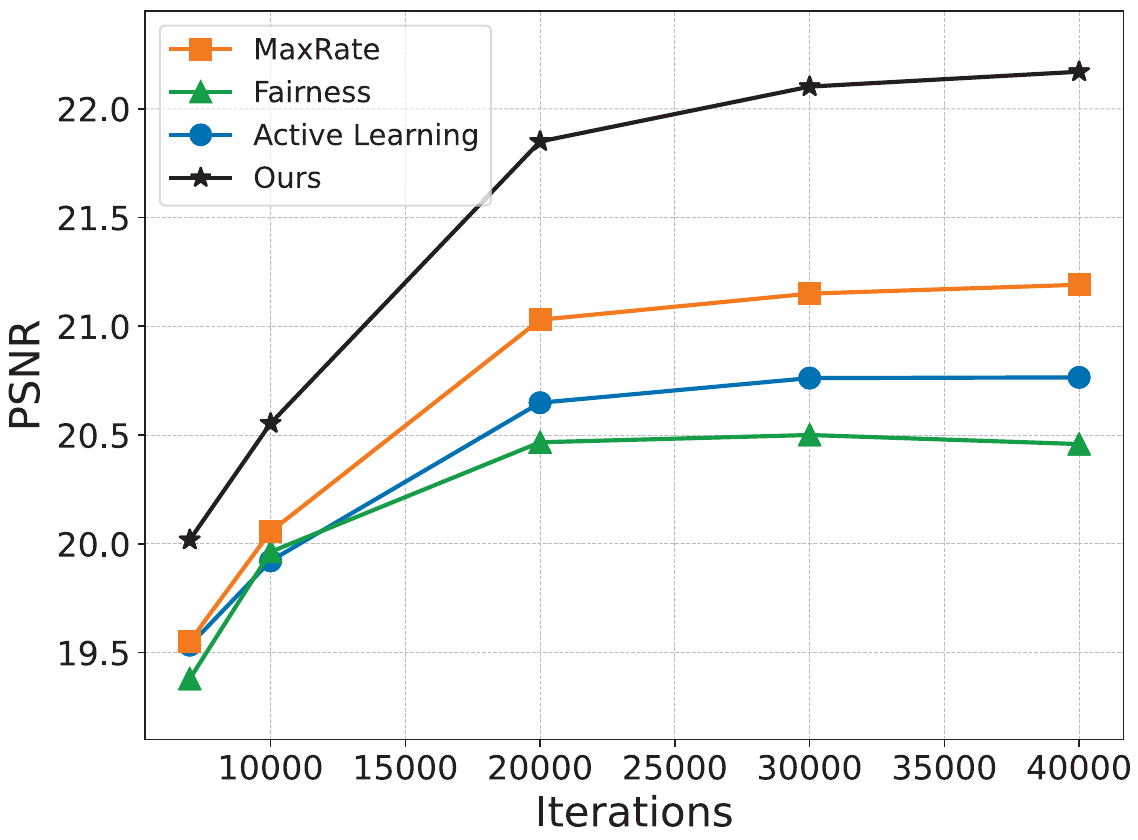}
        \caption{PSNR}
        \label{fig:psnr_performance_iterations}
    \end{subfigure}%
    \begin{subfigure}[b]{0.33\textwidth}
        \includegraphics[width=\linewidth]{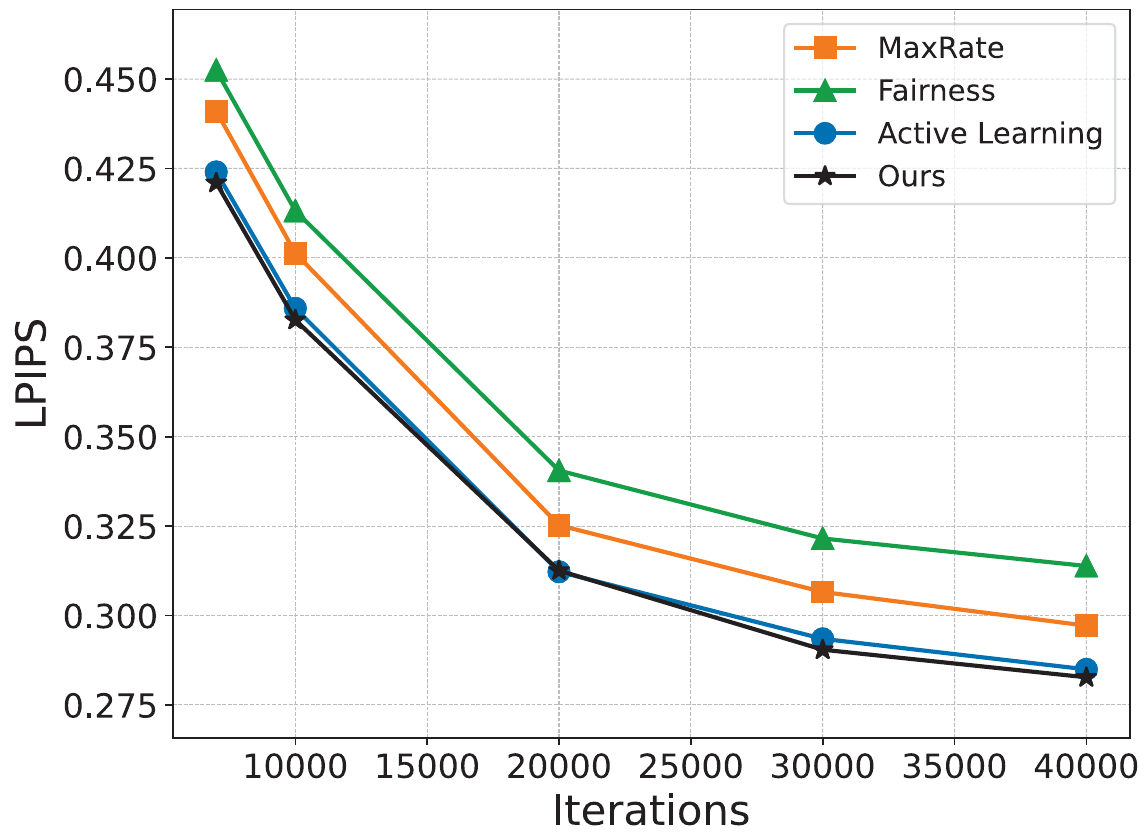}
        \caption{LPIPS}
        \label{fig:lpips_performance_iterations}
    \end{subfigure}%
    \begin{subfigure}[b]{0.33\textwidth}
        \includegraphics[width=\linewidth]{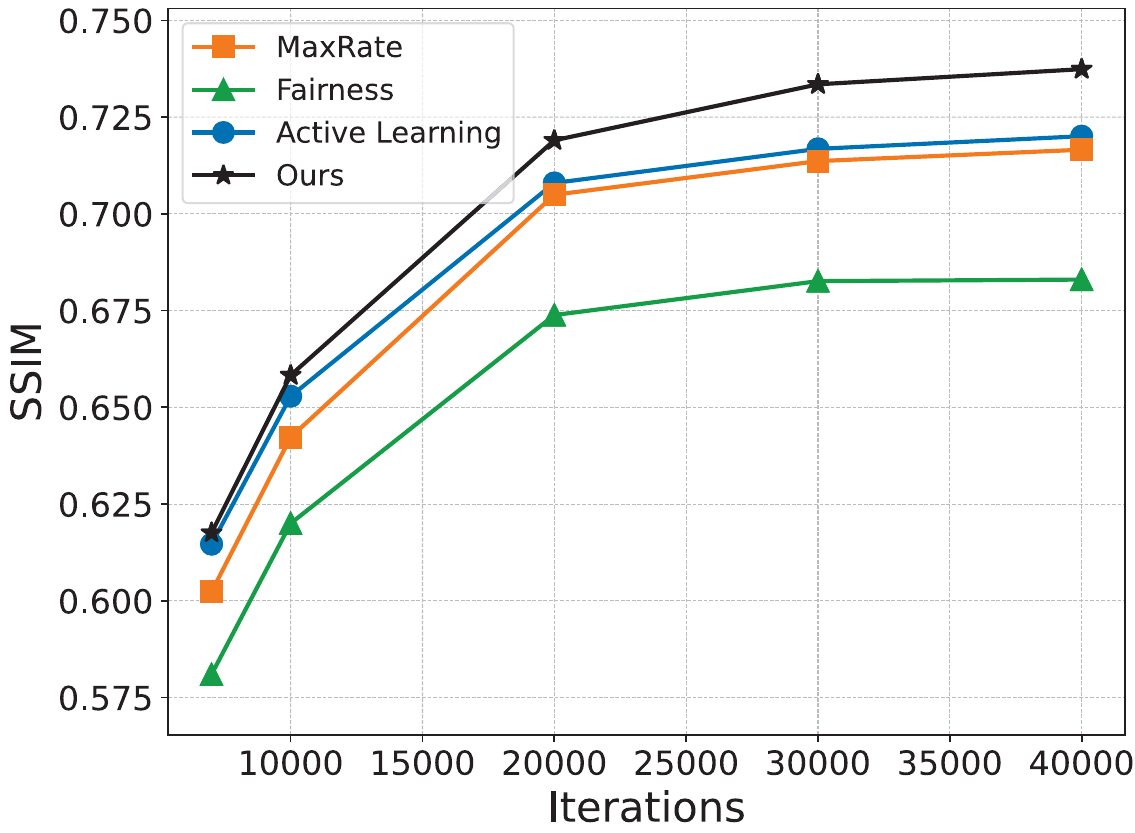}
        \caption{SSIM}
        \label{fig:ssim_performance_iterations}
    \end{subfigure}
    
    \caption{PSNR, LPIPS and SSIM versus the number of iterations.}
    \label{fig:performance_metrics_iterations}
\end{figure*}

\begin{figure}[!t]
    \centering
    \begin{subfigure}[b]{0.48\linewidth}
        \centering
        \includegraphics[width=\linewidth]{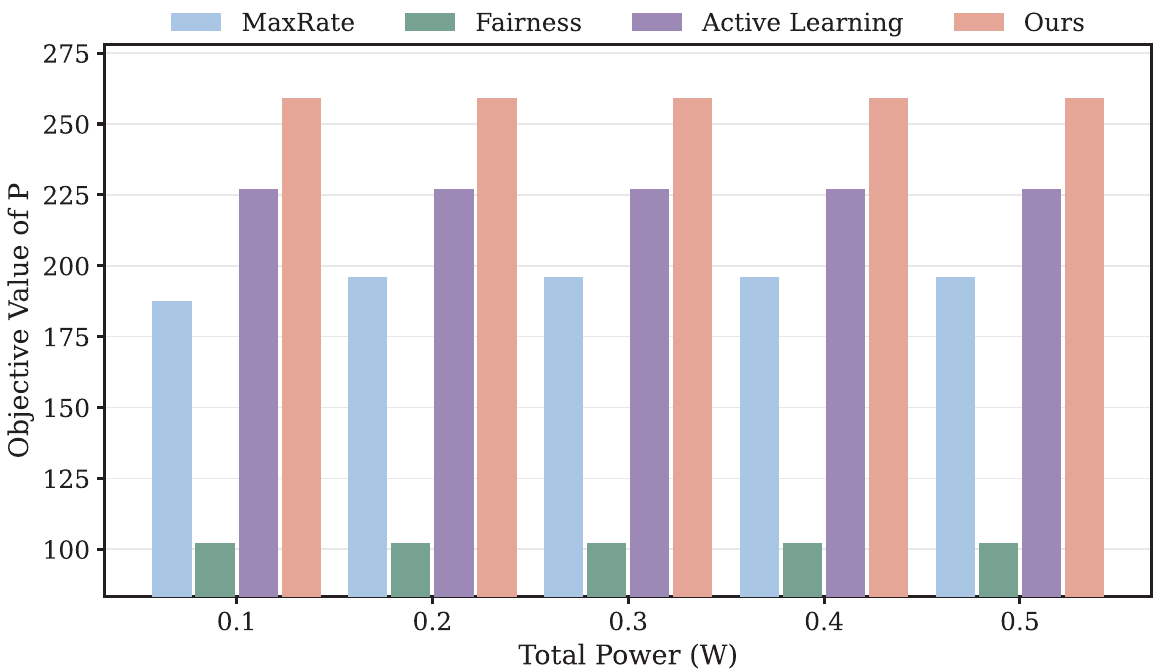}
        \caption{GS objective value}
        \label{fig:total_loss_power}
    \end{subfigure}
    \begin{subfigure}[b]{0.48\linewidth}
        \centering
        \includegraphics[width=\linewidth]{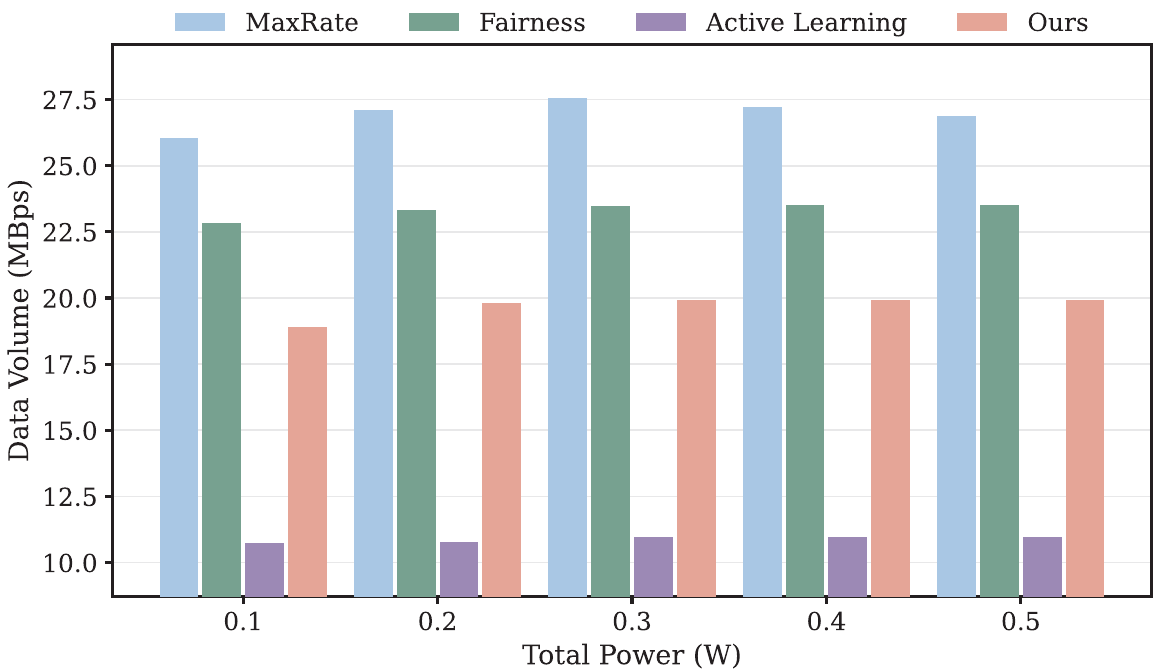}
        \caption{Data volume}
        \label{fig:sum_rate_power}
    \end{subfigure}
    \caption{Performance under different power budgets.}
    \label{fig:power_impact}
\end{figure}

\begin{figure}[!t]
    \centering
    \begin{subfigure}[b]{0.48\linewidth}
        \centering
        \includegraphics[width=\linewidth]{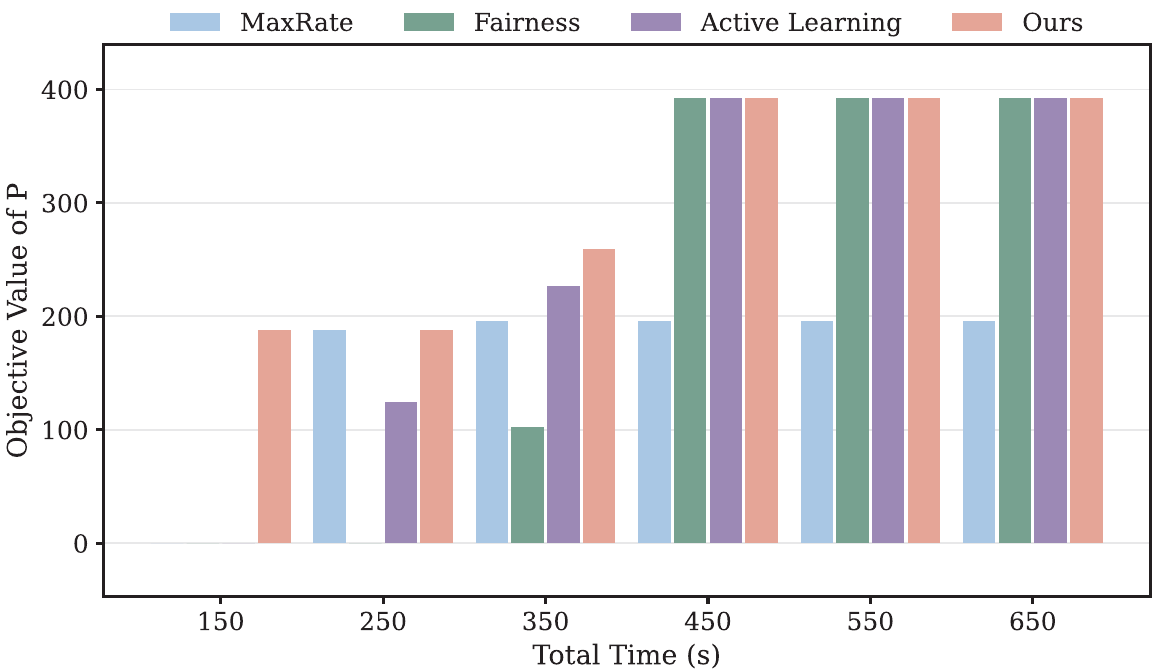}
        \caption{GS objective value}
        \label{fig:total_loss_time}
    \end{subfigure}
    \begin{subfigure}[b]{0.48\linewidth}
        \centering
        \includegraphics[width=\linewidth]{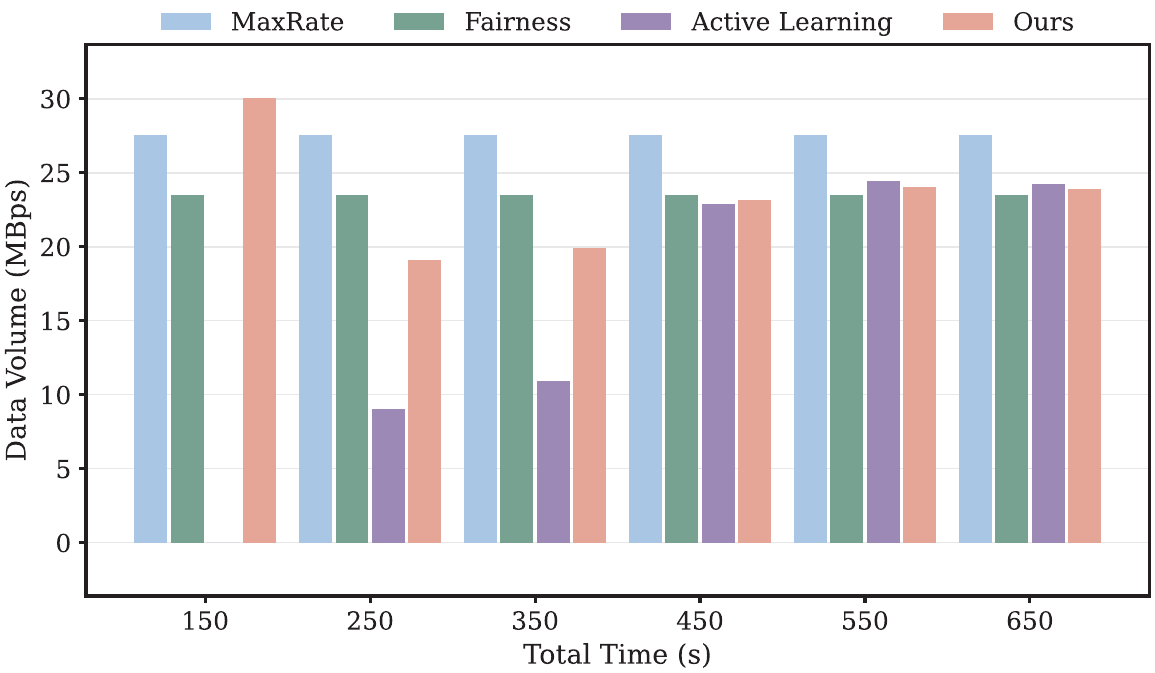}
        \caption{Data volume}
        \label{fig:sum_rate_time}
    \end{subfigure}
    \caption{Performance under different time budgets.}
    \label{fig:time_variation}
\end{figure}

\subsection{Sensitivity Analysis}

To test the robustness of the proposed method, we train the GS model under different training iterations, i.e., $(7000, 10000, 20000, 30000, 40000)$. The trained models are then used to render images, with performance evaluated using PSNR, LPIPS, and SSIM metrics. 
As can be observed in Fig.~\ref{fig:performance_metrics_iterations}, the PSNR and SSIM values increase as the number of iterations increases. 
Specifically, our STT-GS demonstrates superior performance compared to three other methods across all the iterations. 
We also observe that the rendering performance becomes saturated when the number of iterations exceeds $30,000$. 
This supports the rationale for a default setting of $30,000$ iterations for GS training, which corroborates the findings in \cite{kerbl20233d}.

Then, we evaluate the performance of the proposed and benchmark schemes under different values of power budgets and time budgets. 
The associated quantitative results are shown in Fig.~\ref{fig:power_impact} and Fig.~\ref{fig:time_variation}, respectively. 
It can be seen from Fig.~\ref{fig:power_impact}a and Fig.~\ref{fig:time_variation}a that no matter how the resource budget varies, the proposed method always achieves the largest objective value of $\mathsf{P}$. 
{\color{black}This corroborates our theory in Section II, and implies that our method collects those datasets that could change the GS model from $\mathcal{S}'$ to $\mathcal{S}$ with the maximum extent.}
Interestingly, it can be observed from Fig.~\ref{fig:power_impact}b and Fig.~\ref{fig:time_variation}b that the MaxRate scheme always collects the most data due to its design objective but performing poorer than our method. 
{\color{black}This demonstrates that our method focuses not only on the quantity but also on the quality of the data to be transmitted, thereby enhancing performance in the GS context.}

Finally, we evaluate our method on another LAE dataset, building-pixsfm. Table~\ref{Table-different} reports the PSNR, SSIM, and LPIPS metrics. 
Again, our method significantly enhances the rendering quality. Specifically, for PSNR, our method outperforms MaxRate by 5.25\% and Fairness by 2.58\%.
For SSIM, our method  outperforms MaxRate by 11.37\% and Fairness by 5.71\%.
For LPIPS, our method reduces the error by 10.54\% compared to MaxRate and 4.66\% compared to Fairness. Moreover, Fig.~\ref{fig:building_visual_comparison} presents rendering visualizations produced by different methods, demonstrating that the proposed approach achieves the highest rendering quality. These results demonstrate the robustness and strong generalization capability of our method across various scenarios.

\begin{figure}[t] 
    \centering
    \includegraphics[clip, width=0.98\columnwidth]{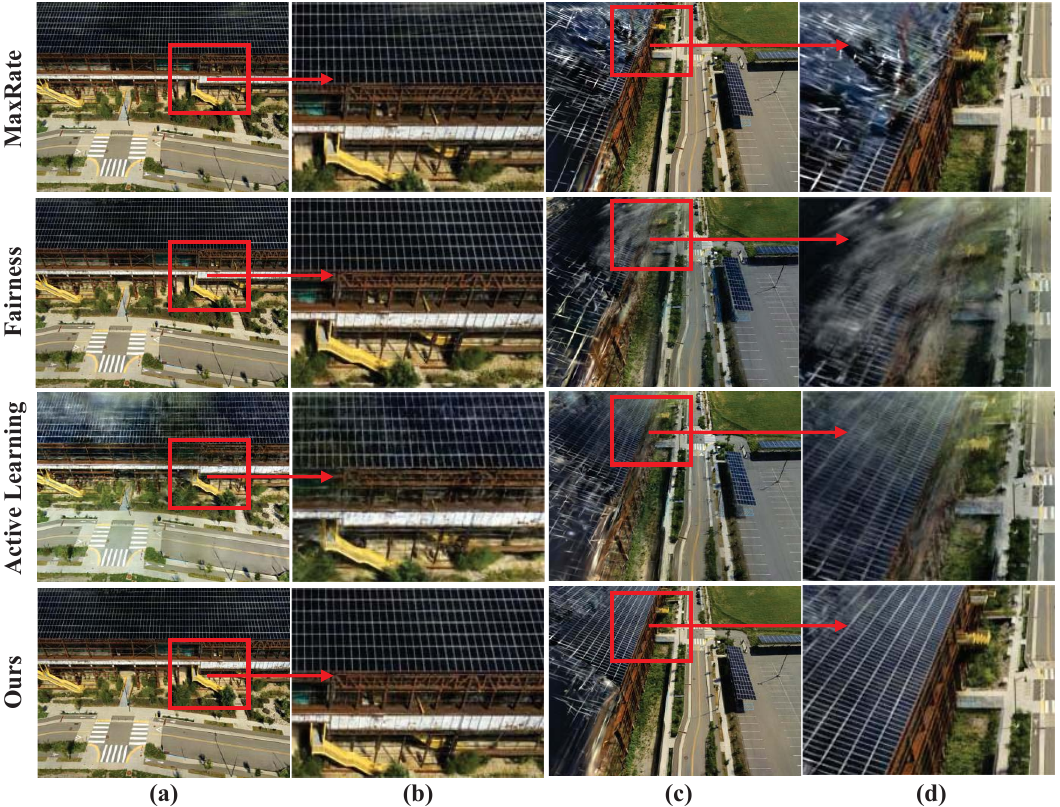}
    \caption{Comparison of rendering qualities on building-pixsfm.}
    \label{fig:building_visual_comparison}
\end{figure}

\begin{table}[!t]
\caption{Quantitative Results on Building-pixsfm Datesets}
\label{Table-different}
\centering
\scalebox{0.65}{
\begin{tabular}{lccc}
    \toprule
    & \multicolumn{3}{c}{Building-pixsfm}  \\
    \cmidrule(lr){2-4} 
    Method & $\uparrow$PSNR & $\uparrow$SSIM &$\downarrow$ LPIPS\\
    \midrule
    MaxRate & 17.2806 & 0.55589 & 0.39007 \\
    Fairness & 17.7297 & 0.58566 & 0.36600  \\
    Active Learning & 18.1186 & 0.61027 & 0.34410  \\
    Unrestricted & 19.0629 & 0.65454 & 0.32791 \\
    Ours & \textbf{18.1879}(+5.25\%) & \textbf{0.61910}(+11.37\%) & \textbf{0.34895} (-10.54\%) \\
    \bottomrule
    \vspace{-0.15in}
    \\
    \multicolumn{3}{c}{Note: Performance gain is computed against the MaxRate scheme} 
  \end{tabular}
}
\vspace{-0.05in}
\end{table}

\section{Conclusion}\label{section7}

This paper presented a novel STT-GS paradigm for multi-client 3D reconstruction. 
Our approach efficiently addressed the causality dilemma associated with optimizing an unknown GS-oriented objective function by prioritizing communication resources towards more valuable clients with higher view contributions. 
The FDC and PTTM algorithms were proposed to reduce pilot overhead and a joint optimization of client selection and power allocation was conducted for the EGS system based on PAMM. 
Our experiments demonstrated that our proposed scheme with FDC, PTTM, and PAMM algorithms outperforms various existing benchmarks.
It is found that serving clients with the most valuable GS data or the best channel condition is not always beneficial.
Interestingly, the proposed method effectively balances GS rendering and communication cost by a two-stage cross-layer optimization.

\bibliographystyle{IEEEtran}
\bibliography{ref}
\appendix
\section{Proof of Proposition 1}
\label{app:proof_prop1}

\subsection{Upper Bound Property}

{For $\widehat{\varphi}_1$:}  
Let $\varphi_1(\mathbf x)=\frac{1}{\beta}\sum_{k=1}^K x_{k}(1-x_k)$. The surrogate function is:
\begin{align*}\widehat{\varphi}_1(\mathbf x|\mathbf x^\star ) &= 
\sum_{k=1}^K 
\left(
\frac{1}{\beta}x_{k}-\frac{2}{\beta}x_{k}^{\star}x_{k}+\frac{1}{\beta}x_{k}^{\star^2}
\right).
\end{align*}
Subtracting $\varphi_1(\mathbf x)$ from $\widehat{\varphi}_1(\mathbf x|\mathbf x^\star)$:
\begin{align*}
\widehat{\varphi}_1(\mathbf{x}|\mathbf{x}^\star) - \varphi_1(\mathbf{x}) 
  &= \frac{1}{\beta} \sum_{k=1}^K \left(-x_k^2 + 2x_k^\star x_k - x_k^{\star 2}\right) \\
  &= \frac{1}{\beta} \sum_{k=1}^K (x_k - x_k^\star)^2 \geq 0.
\end{align*}
Thus, $\widehat{\varphi}_1(\mathbf x|\mathbf x^\star) \geq \varphi_1(\mathbf x)$.

{For $\widehat{\Phi}_k$:}  
Let $f(\bm{\xi}) = \log_2\left(1 + \frac{\xi_k H_{k,k}}{\sum_{j \neq k} \xi_j H_{k,j} + \sigma^2}\right)$. Since $f(\bm{\xi})$ is concave, its first-order approximation at $\bm{\xi}^\star$ satisfies:
$$
f(\bm{\xi}) \leq f(\bm{\xi}^\star) + \nabla f(\bm{\xi}^\star)^T (\bm{\xi} - \bm{\xi}^\star).
$$
Substituting into $\widehat{\Phi}_k$:
$$
\widehat{\Phi}_k(\mathbf{x},\bm{\xi}|\bm{\xi}^\star) = \eta_k x_k - \left[f(\bm{\xi}^\star) + \nabla f(\bm{\xi}^\star)^T (\bm{\xi} - \bm{\xi}^\star)\right] \geq \Phi_k(\mathbf{x},\bm{\xi}).
$$

\subsection{Convexity}

We compute the Hessian of $\widehat{\varphi}_1$ with respect to $x$ and 
the Hessian of $\widehat{\Phi}_k$ with respect to $(x,\bm{\xi})$. 
After standard mathematical calculations, it can be shown that 
\begin{align}
    \nabla_\mathbf x^2 \widehat{\varphi}_1 = \mathbf{0}, \ 
     \nabla_{(x,\bm{\xi})}^2 \widehat{\Phi}_k = \mathbf{0}, \ \forall k,
\end{align}
which are both positive semi-definite. Hence, $\widehat{\varphi}_1$ and $\widehat{\Phi}_k$ are convex.

\subsection{Local Equivalence}

\emph{Function Value}:  
At $(\mathbf{x}^\star, \bm{\xi}^\star)$:
$$
\widehat{\varphi}_1(\mathbf{x}^\star | \mathbf{x}^\star) = \frac{1}{\beta} \sum_{k=1}^K \left(x_k^\star - x_k^{\star 2}\right) = \varphi_1(\mathbf{x}^\star).
$$
Similarly, substituting $\bm{\xi} = \bm{\xi}^\star$ into $\widehat{\Phi}_k$, we have
$$
\widehat{\Phi}_k(\mathbf{x}^\star, \bm{\xi}^\star | \bm{\xi}^\star) = \Phi_k(\mathbf{x}^\star, \bm{\xi}^\star).
$$

\emph{Gradient Value}:  
For $\widehat{\varphi}_1$:
$$
\nabla_\mathbf{x} \widehat{\varphi}_1(\mathbf{x} | \mathbf{x}^\star) = \frac{1}{\beta} \left(1 - 2\mathbf{x}^\star\right), \ \nabla_\mathbf{x} \varphi_1(\mathbf{x}) = \frac{1}{\beta} \left(1 - 2\mathbf{x}\right).
$$
Therefore, $\nabla_\mathbf{x} \widehat{\varphi}_1(\mathbf{x} | \mathbf{x}^\star)=
\nabla_\mathbf{x} \varphi_1(\mathbf{x})
$ at $\mathbf{x} = \mathbf{x}^\star$. 
Similarly, it can be verified that  $\nabla_{(\mathbf{x}, \bm{\xi})} \widehat{\Phi}_k$ equals $\nabla_{(\mathbf{x}, \bm{\xi})} \Phi_k$ at solution point $(\mathbf{x}^\star, \bm{\xi}^\star)$.
\end{document}